\newtheorem{theorem}{Theorem}[section]
\newtheorem{lemma}{Lemma}[section]
\newcommand{\vecX}{\mathbf{x}}
\newcommand{\vecW}{\mathbf{w}}
\newcommand{\tS}{h} % segment length
  \providecommand\BibTeX{{%
    \normalfont B\kern-0.5em{\scshape i\kern-0.25em b}\kern-0.8em\TeX}}}
\begin{document}

\title{InDiD: Instant Disorder Detection via a Principled Neural Network}

\author{Evgenia Romanenkova}
\affiliation{%
  \institution{Skolkovo Institute of Science and Technology (Skoltech)}
  \city{Moscow}
  \country{Russia}
}
\email{shulgina@phystech.edu}

\author{Alexander Stepikin}
\affiliation{%
  \institution{Skolkovo Institute of Science and Technology (Skoltech),}
  \institution{Moscow Institute of Physics and Technology}
  \city{Moscow}
  \country{Russia}
}
\email{stepikin.al@phystech.edu}

\author{Matvey Morozov}
\affiliation{%
  \institution{Skolkovo Institute of Science and Technology (Skoltech)}
  \city{Moscow}
  \country{Russia}
}
\email{m.morozov@skoltech.ru}

\author{Alexey Zaytsev}
\affiliation{%
  \institution{Skolkovo Institute of Science and Technology (Skoltech)}
  \city{Moscow}
  \country{Russia}
}
\email{a.zaytsev@skoltech.ru}

\renewcommand{\shortauthors}{Evgenia Romanenkova, Alexander Stepikin, Matvey Morozov, \& Alexey Zaytsev}

\begin{abstract}
For sequential data, a change point is a moment of abrupt regime switch in data streams.
Such changes appear in different scenarios, including simpler data from sensors and more challenging video surveillance data. We need to detect disorders as fast as possible. 
Classic approaches for change point detection (CPD) might underperform for semi-structured sequential data because they cannot process its structure without a proper representation.
We propose a principled loss function that balances change detection delay and time to a false alarm. It approximates classic rigorous solutions but is differentiable and allows representation learning for deep models. 
We consider synthetic sequences, real-world data sensors and videos with change points. We carefully labelled available video data with change point moments and released it for the first time.
Experiments suggest that complex data require meaningful representations tailored for the specificity of the CPD task --- and our approach provides them outperforming considered baselines.
For example, for explosion detection in video, the F1 score for our method is $0.53$ compared to baseline scores of $0.31$ and $0.35$.
\end{abstract}

\begin{CCSXML}
<ccs2012>
<concept>
<concept_id>10010147.10010257.10010258.10010260.10010229</concept_id>
<concept_desc>Computing methodologies~Anomaly detection</concept_desc>
<concept_significance>300</concept_significance>
</concept>
<concept>
<concept_id>10010147.10010257.10010293.10010294</concept_id>
<concept_desc>Computing methodologies~Neural networks</concept_desc>
<concept_significance>500</concept_significance>
</concept>

<concept>
<concept_id>10010147.10010178.10010224.10010240.10010241</concept_id>
<concept_desc>Computing methodologies~Image representations</concept_desc>
<concept_significance>300</concept_significance>
</concept>
</ccs2012>
\end{CCSXML}

\ccsdesc[500]{Computing methodologies~Anomaly detection}
\ccsdesc[500]{Computing methodologies~Neural networks}
\ccsdesc[300]{Computing methodologies~Image representations}

\keywords{change point detection; representation learning; video analysis and understanding}

\maketitle

\section{Introduction}
\label{sec:introduction}

Modern industry uses complicated systems continuously working online that are vital for the well-being of large companies and humankind in general. Systems' collapses or prolonged unavailability lead to significant losses to business owners. Thus, it is essential to detect deviations or, in other words, change points in a system's behaviour as fast as possible.
Typically, data for detection come from a sequential stream represented as either multivariate vectors from sensors or, so-called, semi-structured data such as videos/image sequences \cite{van2020evaluation,sultani2018real}. 
A natural impulse is to use the available historical information to create a change point detector --- a model that warns about distribution disorders online. Such a model can identify human activity change based on smart devices or detect car accidents from video streams.

\begin{figure*}[!ht]
    \centering
    \includegraphics[width=0.5\textwidth]{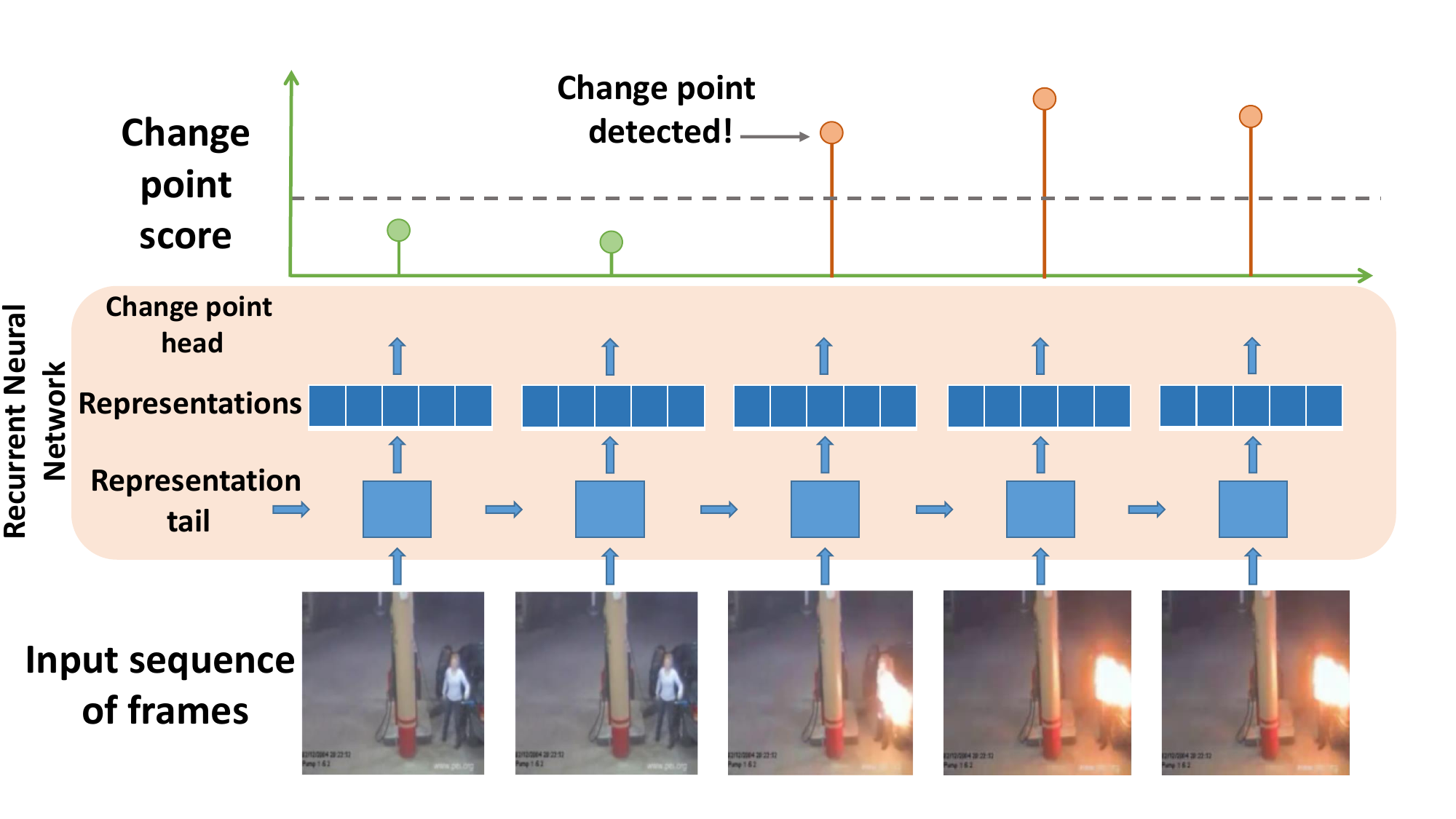}
    \caption{A scheme of work for representation-based change point detection approach}
    \label{fig:cpd_scheme}
\end{figure*}

We can also describe disorders, ruptures, shifts and other changes in sequential data as point anomalies. Currently, the anomaly detection area concentrates on applying machine learning and deep learning techniques. It adopts widespread methods and considers more specific models and loss functions~\cite{sultani2018real}.
However, instead of an accurate \emph{anomaly moment} detection, the majority of existing methods concentrate on classifying whole sequences whether they contain anomalies~\cite{shen2020timeseries,li2018anomaly}.
An alternative approach comes from Change Point Detection (CPD) area: we aim to minimize the delay of disorder detection and the number of false alarms. 
The CPD problem statement better reflects industrial needs in various monitoring systems. 
While in demand, examples of using modern deep learning techniques for CPD are scarce if the input data are semi-structured like natural language~\cite{iyer2020style}, image sequences or videos~\cite{aminikhanghahi2017survey,truong2020selective}. 

The challenge is that principled criteria related to the change detection delay and the number of false alarms are cumbersome to compute and differentiate (we discuss this point in the section \ref{sec:methods}). Thus, it is impossible to include them in a representation learning framework to train million-parameters neural networks.
Another natural desire from the CPD model is to work in an online mode, as it is the most reasonable and applicable for real-world problems~\cite{xu2019temporal}.

Some natural ad-hoc approaches can deal with simple CPD problems~\cite{aminikhanghahi2017survey}. However, they perform poorly on complex semi-structured data or are restricted while working in online regimes requiring careful hyperparameter selection ~\cite{van2020evaluation}. 
Other advanced methods suitable for processing complex data often do not consider the principled specifics of the CPD problem while training the model and can have significant detection delays and frequent false alarms~\cite{deldari2021tscp2}.

So, we need a framework based on representation learning to fit industrial and scientific needs that comply with the theory related to change point detection~\cite{truong2020selective}.
We expect such an approach to provide a high-quality solution for the CPD problem with semi-structured data, advancing both understanding principles of representation learning for disorder detection and improving deep learning model quality in this area. 

\begin{table}[b]
    \centering
    \caption{Mean performance ranks of considered methods averaged over datasets. See more details below on metrics.}
    \label{tab:ranks}
    \begin{tabular}{lccc}
    \hline
    Metric & AUC & F1 & Cover \\
    \hline
    KL-CPD~\cite{chang2018kernel} & 4.17 & 4.17 & 3.50 \\
    TSCP~\cite{deldari2021tscp2} & 4.67 & 3.83 & 4.66 \\
    BCE & 3 & 2.17 & 2.17 \\
    InDiD (ours) & \textbf{1.5} & \textbf{1.67} & \textbf{1.5} \\
    BCE+InDiD (ours) & \underline{1.67} & \underline{2} & \underline{2}
    \\
    \hline
    \end{tabular}
\end{table}

Our main contributions allow bridging this gap and in more detail are the following:
\begin{itemize}
    \item We present an Instant Disorder Detection (InDiD) framework for the change-point detection for semi-structured data. The method detects changes in the online mode and makes a decision based on the information available at the current moment in case of a single and multiple change points to detect. The scheme of the framework is in Figure~\ref{fig:cpd_scheme}.
    \item To ensure fast and correct change detection, our loss function takes into account both task-specific criteria -- change point detection delay and a number of false alarms -- depicted in Figure~\ref{fig:metrics_figures}. The loss approximates them, making it possible to train large deep models and allow representation learning required for semi-structured data. 
    \item We expand the range of possible applications of CPD algorithms in terms of input data. In particular, we release change-point time labels for datasets that allow the comparison of different approaches in complex scenarios. 
    \item  We conduct a thorough analysis of our approach performance, including investigating an embedding space and quality metrics for different datasets. For considered datasets, our approaches outperform alternatives, as seen from Table~\ref{tab:ranks}.
\end{itemize}

We hope that both our proposed method and labelled datasets would lead to tremendous change in how change points are detected in industrial applications, including various sensors data and video surveillance --- as it allows the processing of such data by deep learning models.

\begin{figure*}[t]
\centering
\begin{subfigure}{.45\textwidth}
 \centering
 \includegraphics[width=0.8\linewidth]{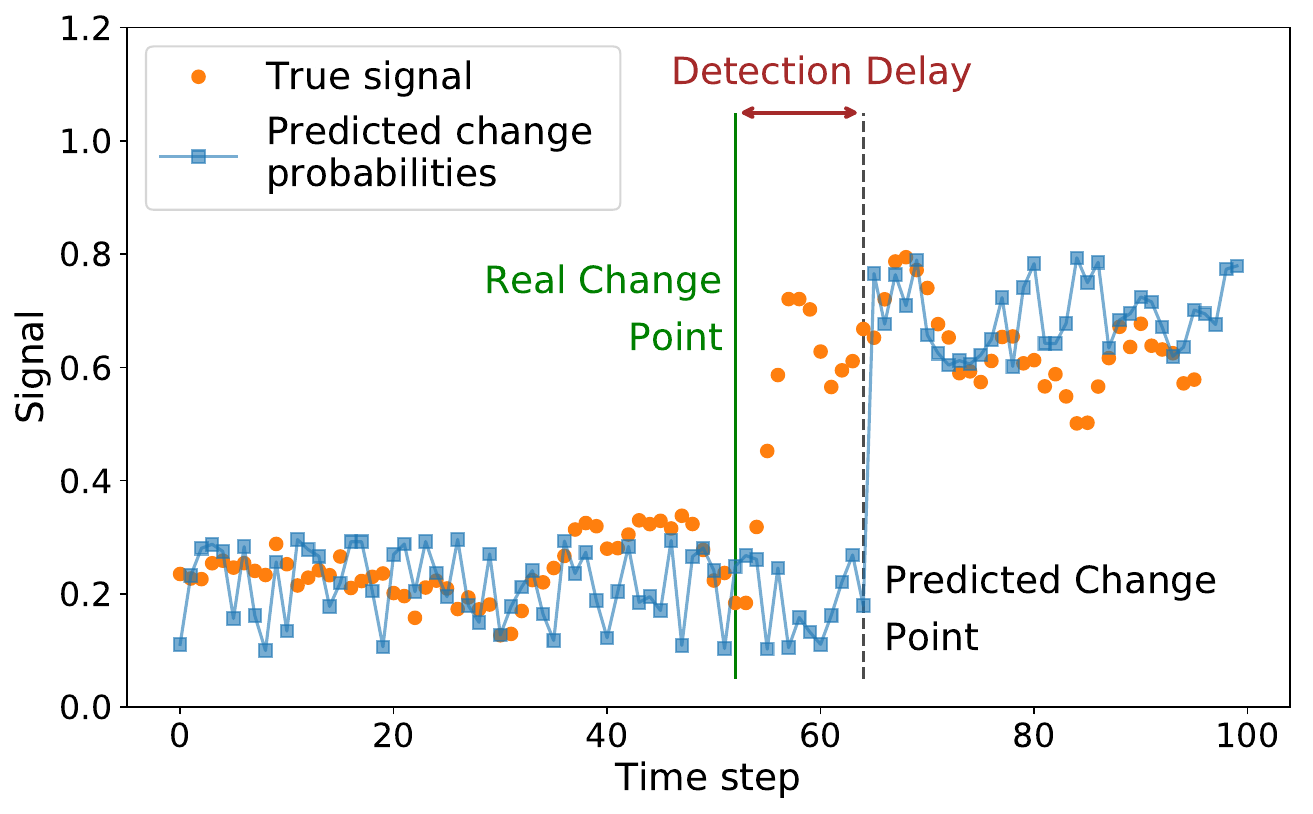}
\end{subfigure}%
\begin{subfigure}{.45\textwidth}
 \centering
 \includegraphics[width=0.8\linewidth]{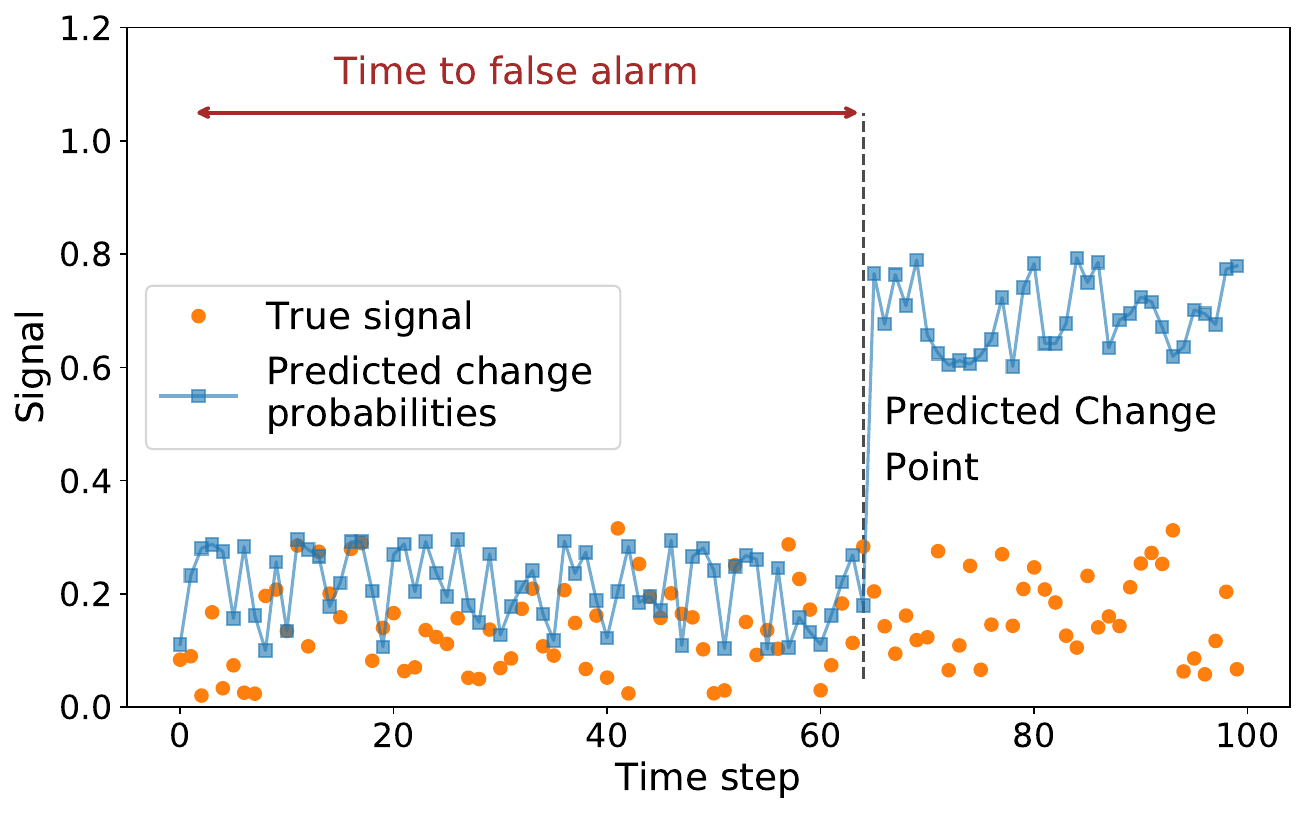}
\end{subfigure}
\caption{Examples of errors during change point detection: delayed change detection (left) and false alarm (right) and corresponding quality metrics Detection Delay and Time to False Alarm. Our approach minimizes a loss function that is a weighted sum of approximations of these errors.}
\label{fig:metrics_figures}
\end{figure*}

\section{Related work}
\label{sec:related_work}

Change Point Detection is a widely studied problem in machine learning and statistics for sequential data. Several approaches to defining the problem exist. 
For example, the classic CPD problem statement considers two natural performance measures: a detection delay and a number of false alarms ~\cite{polunchenko2012state, shiryaev2017stochastic} or their analogues. 
We closely follow this statement while looking at other similar ideas and results.
Below we highlight the main research directions and relevant works.

For strong assumptions, theoretically optimal solutions like CUSUM are available~\cite{tartakovsky2010state}.
However, there are little or no theoretical guarantees in realistic high-dimensional cases.
For example, the authors~\cite{wang2022high} suggest that the detection delay is of order $\sqrt{D}$, where $D$ is a single observation dimension, which is prohibitively high for many real problems.
While additional assumptions like sparsity allow reducing the delay, practitioners go a different road hoping that simpler models can learn to change detection.

More applied offline methods aim to find an optimal partition of a signal~\cite{van2020evaluation, truong2020selective} to a part before and after the change point. 
For this case, one can use approaches based on searching through all possible partitions with or without regularisation terms in cost function~\cite{bai1997estimating, killick2012optimal}. 
Although such methods have been known for a long time, they still are successful for different simple univariate time series~\cite{van2020evaluation} ahead of other more complex approaches like non-trainable Kernel CPD~\cite{harchaoui2009kernel, arlot2019kernel}. 

The comprehensive work~\cite{van2020evaluation} evaluated the most popular CPD methods on several sequences, including a small number of multivariate (with maximum dimension equals 4) and synthetic signals. The authors note that most methods perform poorly for multivariate sequences and require accurate hyperparameter selections. 
For a more in-depth overview with detailed descriptions of CPD methods particularities and performances comparison, interested readers can look into review papers and books~\cite{truong2020selective,aminikhanghahi2017survey, van2020evaluation, xie2021sequential}.

An intuitive method to work with sophisticated structured data streams is to embed them into a low-dimensional space where classic CPD methods are already known to work. In the paper ~\cite{romanenkova2019real}, the authors propose to obtain low-dimensional representations using classic machine learning approaches. On top of it, they apply CPD statistics similar to~\cite{polunchenko2012state}. The authors conclude that we need both a classifier for anomalous/normal data and a change point detector to precisely identify a boundary between classes for an industry-ready solution. A similar idea to use a non-trainable Kernel CPD for improving semi-supervised classification appears in~\cite{ahad2020semi}.

However, the power of classic CPD methods is limited by their inability to \emph{learn} representations of complex data.
The development of deep learning methods allows researchers to consider representation-learning models for CPD tasks via neural networks. A straightforward approach~\cite{hushchyn2020online} uses a similar idea as in CPD based on the probability density ratio but compares representations from neural networks. Another paper adopts deep Q learning to decide on change points~\cite{ma2021deep}.

More formal representation learning aims at obtaining data embeddings by considering the particularities of a task. In relation to sequential data, this task is challenging, as we should take into account the context, but it seems to be the only way to get high-performing models~\cite{ionescu2019object}.
The trainable Kernel CPD (KL-CPD) presented in the paper~\cite{chang2018kernel} uses Maximum Mean Discrepancy (MMD)~\cite{li2015m} for CPD but learns kernels' parameters based on historical data similar to the GAN approach~\cite{goodfellow2014generative}. 
The research ~\cite{deldari2021tscp2} shows that we can apply CPD in low-dimensional space obtained via learning universal representations that close similar objects and divide dissimilar ones. In particular, authors use an analogue of contrastive predictive coding ~\cite{oord2018representation} from a self-supervised area and estimate CP via cosine distance between obtained embeddings. It is important to note that both approaches need to estimate the similarity between two subsequences (so-called history and future), while inference in the online regime, the detection delay depends on the window size.

Unfortunately, the authors validate their approaches only via tabular multivariate time series data,  avoiding video or image sequences datasets. The only exception is ~\cite{hushchyn2020online}, where the authors use sequences based on MNIST with 64 features per time. 

The anomaly detection area provides more research on video processing. The authors of the paper~\cite{sultani2018real} propose a dataset with surveillance video and consider anomaly detection by using multiple instances learning for the dataset with labelling on the whole-video level. The idea is close to the ~\cite{deldari2021tscp2}. They want to bring closer representations of normal frames and move apart the ones of anomaly frames. Another approach from learning video representations area that appeared in the paper \cite{epstein2020oops} is to learn the model to detect the fact of change/anomaly in the small video sub-sequence and then apply it to the full video. While not considering a specific CPD problem statement, the authors of these papers aim at pretty close anomaly detection and use similar datasets. Although these methods are interesting and prospective, they are hard to directly adapt to CPD as they concentrate on anomaly appearances or comparison rather than on the perfect moment identification and do not consider the CPD particularities.

Thus, the existing methods can construct neural-networks-based solutions for anomaly detection and learn corresponding representations. At the same time, they rarely can point at precise change moments online, which is crucial for many applications. Moreover, there are more theoretical-based approaches related to change detection, but they often operate with too restrictive assumptions and have a limited ability to learn complex data representation.

In our work, we unite these two ideas by proposing a principled solution that enables representation learning for deep neural models.
As the amount of available data in such a problem is moderate, we focus on neural networks architectures with a smaller number of parameters like LSTMs~\cite{hochreiter1997long} and GRUs~\cite{cho2014learning} and usage of existing pre-trained models. 

\section{Methods}
\label{sec:methods}

\subsection{Background on the problem statement}

Change point detection aims at the quickest identification of the true change moment $\theta$ in a data stream with a small number of false alarms. The stream of length $T$ is available as sequential observations $X^{1:T} = \{\vecX_i\}_{i=1 }^T$, $\vecX_i \in \mathbb{R}^D$. The desired online CPD procedure should look at the available data up to moment $t \leq T$ and signal about the change point $\tau = t$ or suggest continuing observations. As the described procedure makes a decision using only information available up to the current moment $t$, we work with an online problem statement.

More formally, for $t < \theta$ the data come from a normal-data distribution $p_\infty$, and for $t \geq \theta$ --- from anomalous distribution $p_0$. 
We denote $p_\theta$ the joint data distribution $p(X_{1:T})$ given that the change point is $\theta$. 
Our goal is to find a procedure that produces an estimate $\tau^*$ of the change point. Such estimate should minimize the expected detection delay $\mathbb{E}_\mathbb{\theta} (\tau - \theta)^+$ for the constraint $\mathfrak{M}_a$ such that the Average Time to False alarm is greater than a large value $a$~\cite{polunchenko2012state}:
\begin{gather} 
    \label{eq:ctiteria}
    \tau^* = \underset{\tau \in \mathfrak{M}_a}{ \arg \inf \ }  \mathbb{E}_\mathbb{\theta} (\tau - \theta)^+, 
    \text{where } \mathfrak{M}_a = \{\tau: \mathbb{E}_\theta(\tau |\  \tau<\theta) \geq a \}.  
\end{gather}
Here the expectation $\mathbb{E}_\theta$ with index $\theta$ means that our observations come from the data distribution $p_\theta$ with the true change point at time $\theta$.

Using the method of Lagrange
multipliers with parameter $c \geq 0$, we can rewrite two equations from~\eqref{eq:ctiteria} as an optimization problem with a criterion and solution $\tau^*$:
\begin{gather} 
    \label{eq:criteria_lagrange}
    \mathcal{\tilde{L}}(\tau) \rightarrow \min_\tau \, \mathrm{s.t.} \, \, c(\mathbb{E}_\theta(\tau |\  \tau<\theta) - a) = 0,   \\
    \text{where } \mathcal{\tilde{L}}(\tau) = \mathbb{E}_\mathbb{\theta} (\tau - \theta)^+ - c \mathbb{E}_\theta(\tau |\  \tau<\theta). \nonumber
\end{gather}
Below, we discuss how to formulate a loss function suitable for training neural networks based on the general criterion~\eqref{eq:criteria_lagrange}. 

\subsection{Loss function for InDiD approach}

We consider a set of sequences $D = \{(X_1, \theta_1), ... , (X_N, \theta_N)\}$ with similar length $T$, change points $\theta_i$ are in $\{1, ... , T, \infty\}$. 
Each $X_i$ is a sequence of multivariate observations $\{\vecX_{ij}\}_{j=1}^T$. 

Our goal is to construct a model $f_\vecW$ that minimizes the criterion~\eqref{eq:criteria_lagrange} for the online change point detection procedure. We expect the model $f_\vecW$ to deal with a change point in the following way:
\begin{itemize}
    \item A model $f_\vecW$ outputs a series $\{p^i_t\}^T_{t=1}$, with $p^i_t = f_\vecW(X_i^{1:t})$ based only on the information $X_i^{1:t} = \{\vecX_{ij}\}_{j=1}^t \subset X_i$ available up to the moment $t$. 
    A value $p^i_t$ corresponds to the estimate of the probability of a change point in a sequence $X_i$ at a specified time moment $t$. 
    \item As $p^i_t$ depends only on $X_i^{1:t}$, the model works in an online~mode. 
    \item The obtained distribution $p^i$ provides an estimate for a change moment $\theta$. There are two options for such estimates: (A) select a threshold $p_0$ and signal about the change point if the probability exceeds the threshold $p^i_t > p_0$ or (B) report a change point at $t$ with the probability~$p^i_t$. Scenario (B) considers a probabilistic problem statement, while Scenario (A) uses obtained distribution to make discrete decisions.
\end{itemize}

While theoretically appealing, the loss function $\mathcal{\tilde{L}}(\tau)$ from \eqref{eq:criteria_lagrange} is analytically intractable with complex expectations equivalent to infinite series involved to account for all terms from a moment $t$ up to infinity (see~\ref{sec:proofs} for more details). Besides, as the constraint is often omitted in applied machine learning, we focus only on the minimization  $\mathcal{\tilde{L}}(\tau)$ ignoring conditions. 

We then suggest a new principled loss function for training a model $f_\vecW$ that produces the change probabilities $\{p^i_t\}^T_{t=1}$ for scenario (B):
\begin{equation}
    \label{eq:final_loss}
    \tilde{L}^{\tS}(f_\vecW, D, c) =  \tilde{L}^{\tS}_{delay}(f_\vecW, D) - c \tilde{L}_{FA}(f_\vecW, D).
\end{equation}

It consists of two terms. The first term bounds the detection delay expectations (see ~\ref{sec:proofs} for detailed proofs):
\begin{multline}
    \label{eq:approx_loss_delay}
    \tilde{L}^{\tS}_{delay}(f_\vecW, D) = \frac{1}{N} \sum_{i = 1}^N \tilde{L}^{\tS}_{delay}(f_\vecW, X_i, \theta_i), \\
    \tilde{L}^{\tS}_{delay}(f_\vecW, X_i, \theta_i) = \sum_{t = \theta_i}^{\tS} (t - \theta_i) p^i_{t} \prod_{k = \theta_i}^{t - 1} (1 - p^i_{k}) + (\tS + 1 - \theta_i) \prod_{k = \theta_i}^{\tS} (1 - p_{k}^i),
\end{multline}
where $\tS \leq T$ is a hyperparameter that restricts the size
of a considered segment: for $h \rightarrow \infty$ we get an exact expectation of the detection delay. 

The second term approximates the expected time to false alarm looking at interval with no changes either $[0, \theta_i]$ or $[0, T]$, if there are no change point in $i$-th sequence ($\theta_i = \infty$) (see ~\ref{sec:proofs} for detailed proofs):
\begin{multline}
\tilde{L}_{FA}(f_\vecW, D) = \frac{1}{N}\sum_{i = 1}^N \tilde{L}_{FA}(f_\vecW, X_i, \theta_i), \\
\tilde{L}_{FA}(f_\vecW, X_i, \theta_i) = \sum_{t = 0}^{\tilde{T_i}} t p^i_{t} \prod_{k = 0}^{t - 1} (1 - p^i_{k}) - (\tilde{T_i} + 1) \prod_{k = 0}^{\tilde{T_i}} (1 - p^i_{k}), \\
\text{where } \tilde{T_i} = \min(\theta_i, T). 
\end{multline}

\begin{theorem}\label{main_theorem}
The loss function $\tilde{L}^h(f_\vecW, D, c)$ from~\eqref{eq:final_loss}:
\begin{itemize}
    \item[(1)] is a lower bound for a Lagrangian for $\mathcal{\tilde{L}}(\tau)$ from criteria~\eqref{eq:criteria_lagrange};
    \item[(2)] is differentiable with respect to $p_k^i$ and, thus, $\vecW$.
\end{itemize}
\end{theorem}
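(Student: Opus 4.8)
The plan is to read the network through scenario~(B): the outputs $\{p^i_t\}_{t=1}^T$ define a randomized stopping rule for $X_i$ --- at the first time $t$ at which an independent coin of bias $p^i_t$ lands heads, declare a change --- so the reported moment $\tau$ is a random variable with $\mathbb{P}(\tau=t)=p^i_t\prod_{k=1}^{t-1}(1-p^i_k)$, and ``no alarm'' (probability $\prod_{k=1}^{T}(1-p^i_k)$) is encoded as $\tau=T+1$. Under this reading the finite sums in~\eqref{eq:approx_loss_delay} and in $\tilde{L}_{FA}$ are truncated series expansions of $\mathbb{E}_{\theta_i}(\tau-\theta_i)^+$ and of a censored time to false alarm, and the products $\prod_k(1-p^i_k)$ are the survival probabilities $\mathbb{P}(\tau>\cdot)$. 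I would dispose of part~(2) first: for fixed $D$ and $c$, $\tilde{L}^{\tS}(f_\vecW,D,c)$ is a finite sum of finite products of the real numbers $p^i_k$, hence a polynomial in them and in particular $C^\infty$; since each $p^i_k=f_\vecW(X_i^{1:k})$ is differentiable in $\vecW$ for a smooth recurrent model, the chain rule yields differentiability (and an explicit gradient) of $\tilde{L}^{\tS}$ in $\vecW$.

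For part~(1) I would argue term by term. For the delay term, expand $\mathbb{E}_{\theta_i}(\tau-\theta_i)^+=\sum_{t\ge\theta_i}(t-\theta_i)\mathbb{P}(\tau=t)$, restart the clock at $\theta_i$ (equivalently factor out $\prod_{k<\theta_i}(1-p^i_k)$; this is why the $\tS\to\infty$ value is exactly the detection-delay expectation, as stated after~\eqref{eq:approx_loss_delay}), and split the series at $\tS$. The only inequality needed is that the correction $(\tS+1-\theta_i)\prod_{k=\theta_i}^{\tS}(1-p^i_k)$ underestimates the discarded tail $\sum_{t>\tS}(t-\theta_i)\,\mathbb{P}(\tau=t\mid\tau\ge\theta_i)$: every delay realized beyond $\tS$ is at least $\tS+1-\theta_i$, while the total mass of that tail equals $\prod_{k=\theta_i}^{\tS}(1-p^i_k)$ by the telescoping identity $\sum_{t>\tS}p^i_t\prod_{k=\theta_i}^{t-1}(1-p^i_k)=\prod_{k=\theta_i}^{\tS}(1-p^i_k)$ (using the forced stop at $T+1$). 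Hence $\tilde{L}^{\tS}_{delay}(f_\vecW,X_i,\theta_i)\le\mathbb{E}_{\theta_i}(\tau-\theta_i)^+$, and averaging over $i$ preserves the bound.

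For the false-alarm term I would identify $\tilde{L}_{FA}(f_\vecW,X_i,\theta_i)$ with $\mathbb{E}[\tau\,\mathbbm{1}\{\tau\le\tilde{T}_i\}]-(\tilde{T}_i+1)\mathbb{P}(\tau>\tilde{T}_i)$, $\tilde{T}_i=\min(\theta_i,T)$, and exhibit it as the relaxation of $\mathbb{E}_{\theta_i}(\tau\mid\tau<\theta_i)$ obtained by dropping the normaliser $1/\mathbb{P}(\tau<\theta_i)\ge1$ and by folding the trajectory beyond $\tilde{T}_i$ into a single censoring term, so that $c\,\tilde{L}_{FA}$ lies on the side of $c\,\mathbb{E}_{\theta_i}(\tau\mid\tau<\theta_i)$ that the desired inequality requires. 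Combining, since $c\ge0$, adding the delay bound and subtracting $c$ times the false-alarm comparison gives $\tilde{L}^{\tS}(f_\vecW,D,c)\le\mathbb{E}_\theta(\tau-\theta)^+-c\,\mathbb{E}_\theta(\tau\mid\tau<\theta)$, the claimed lower bound on the Lagrangian $\mathscr{\tilde{L}}(\tau)$ of~\eqref{eq:criteria_lagrange}.

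The main obstacle is keeping the inequality directions consistent: the delay term is relaxed downward (truncation of a nonnegative series) while the false-alarm term must be arranged so that $-c$ times it still sits below the Lagrangian, and these relaxations pull opposite ways, so one must be scrupulous about (i) unconditional versus conditional expectations --- $\mathbb{E}_\theta[\,\cdot\,]$ versus $\mathbb{E}_\theta[\,\cdot\mid\tau\ge\theta]$, and $\mathbb{E}_\theta[\tau\mid\tau<\theta]$ with its $\mathbb{P}(\tau<\theta)\le1$ factor, (ii) the finite-horizon boundary value $\tau=T+1$ (the no-alarm event), and (iii) the off-by-one at $t=\theta_i$, which decides whether $\tilde{T}_i$ is $\theta_i$ or $\theta_i-1$ and whether an alarm exactly at $\theta_i$ is a zero-delay detection or a false alarm. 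Once the scenario-(B) semantics and the telescoping tail-mass identity are fixed, each individual estimate is essentially a one-line computation; making the assembled inequality genuinely one-sided is the only delicate step.
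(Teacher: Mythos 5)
Your proposal follows essentially the same route as the paper's own proof: the delay term is treated exactly as in the paper's delay lemma (truncate the series at $h$, lower-bound the discarded tail by $(h+1-\theta)$ times its mass, which telescopes to the survival product $\prod_{k=\theta}^{h}(1-p^i_k)$), the false-alarm term by the analogous censored relaxation, and part (2) by noting the loss is a polynomial in the $p^i_k$ composed with the network, matching the paper's "clear from the equations." The one step you leave as an assertion --- that the false-alarm relaxation, after multiplication by $-c$, still sits on the correct side of the Lagrangian --- is precisely the step the paper also leaves undetailed (its false-alarm lemma is proved only by "the proof is similar" and the theorem by summing the two bounds), so your argument reproduces the paper's, including at its least rigorous point.
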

The proof for (1) is given in Appendix~\ref{sec:proofs}. The (2) is clear from the equations.

Given the results above, our differentiable loss function is a lower bound for the principled loss function in~\eqref{eq:criteria_lagrange}.
The proposed loss is an accurate approximation to the true one since the high-order terms in the sums have a lower magnitude. Each new term is multiplied by a value of magnitude $p^i_k$ before the change and $(1 - p^i_k)$ -- after it. So, if models are of reasonable quality, we observe an exponential decrease for $k$-th order terms as they are proportional to $(1 - p)^k$ or $p^k$.

\subsection{InDiD approach} 

We update the neural network $f_{\mathbf{W}}$ parameters minimizing the differentiable loss $\tilde{L}^h(f_{\mathbf{W}}, D, c)$ from~\eqref{eq:final_loss}.
The used value $c = \frac{h}{2T}$ that normalizes the number of terms for both parts of the loss performs well empirically for all considered datasets.
As we process inputs sequentially, hidden states for our architecture would represent the current sequence tailored for change point detection.

The overall InDiD method is presented in Algorithm~\ref{alg:indid}.
We either train the model from scratch using the proposed loss to get the pure \emph{InDiD} method or train a classifier with binary cross-entropy loss and then fine-tune it with our loss to get \emph{BCE+InDiD} method. 

\begin{algorithm}[ht]
 \SetKwInOut{Input}{input}\SetKwInOut{Output}{output}
 \Input{$\vecW_0$ --- parameters of a pre-trained model (if available)\\ 
        $k$ --- number of iterations \\
        $c$ --- coefficient for the second term in loss function\\
        $h$ --- number of terms considered in $\tilde{L}^{\tS}_{delay}(f_\vecW, D_{b})$}
 \Output{$\vecW$ - parameters after training}
 $\vecW = \vecW_0$ \\
 \For{$i=0$ to $k$}{
     \For{all batch of objects $D_b$ of size $b$ from dataset $D$}{
  1. Calculate the loss function $\tilde{L}^{\tS}(f_\vecW, D_b, c)$ \\
  2. Get the derivatives of the loss $\frac{\partial \tilde{L}^{\tS}(f_\vecW, D_b, c)}{\partial \vecW}$ \\
  3. Update parameters $\vecW$ via Adam using $\frac{\partial \tilde{L}^{\tS}(f_\vecW, D_b, c)}{\partial \vecW}$
  }
 }
 \caption{Our InDiD algorithm}
 \label{alg:indid}
\end{algorithm}

\subsection{Baselines}

We compare the proposed approach with several classical baselines and the methods based on representation learning techniques. 

As classical methods, we use \emph{BinSeg, PELT} and \emph{KernelCPD} from the ruptures package~\cite{truong2020selective}. We apply these methods for simpler CPD problems without the need to learn complex representations. 

More advanced strategies are connected with representation learning via neural networks. The first natural baseline for competing with the \emph{InDiD} approach is to use a seq2seq binary classification:
we want to classify each time step $t$ as a moment before or after a change to get the change point probabilities $p^t_i$. To train a model that predicts these probabilities, we use the binary cross-entropy loss \emph{BCE}. A model \emph{BCE} looks at all data available at the current moment with $p^i_t = f_\vecW(X_i^{1:t})$. When we combine the BCE approach with InDiD, we get \emph{BCE+InDiD}.

We also consider KL-CPD~\cite{chang2018kernel} and TSCP~\cite{deldari2021tscp2} that learn representations from scratch or use the pre-trained ones.
These methods outperform their deep learning
and non-deep learning CPD competitors~\cite{deldari2021tscp2}.

The theoretical complexity of the considered representation-based method is proportional to the sequence length. For the interested reader, we also present running time in Appendix~\ref{sec:implementation}.

\section{Experiments}
\label{sec:experiments}

In this section, we demonstrate how our change-point detection approach works in real-data scenarios and compare it with the others. 
The main results are given for all datasets. 
Due to space limitations, we consider more specific findings only via one dataset; as for the other datasets, the results were almost similar in our experiments.
The code and data are published online\footnote{The code and data labelling are available at \url{https://github.com/romanenkova95/InDiD}}

\subsection{Comparison details} 

Due to the space limitations, we give the majority of the specific implementation details, including particular architectures of considered methods in the \ref{sec:implementation}. Generally, we empirically evaluated several architectures and training hyperparameters to choose the compromise between such significant characteristics for real-world application as the number of network parameters, training stability and the overall performance. At the same time, for InDiD, BCE, and BCE+InDiD, we train a neural network with the same selected architecture and stopping criteria with no changes except loss type. 

We mostly follow papers ~\cite{chang2018kernel, deldari2021tscp2} for baselines KL-CPD and TSCP and do not change the original architectures, only varying their parameters for maximizing the F1-score. The critical point for the performance of KL-CPD is its GAN nature. We have faced huge network computational complexity because of the necessity of coding and decoding multi-dimensional sequences in this approach. To properly compare KL-CPD and TSCP with other considered methods, we transform the models' scores (MMD score and cosine similarity, respectively) to [0, 1] interval via hyperbolic tangent and sigmoid functions.  

Another important note for all models concerns video datasets, where we input not original videos, but their representations obtained via a pre-trained 3D Convolutional network~\cite{feichtenhofer2019slowfast}. We investigated different ways of training models on video --- and this approach leads to better models. 

For classic CPD approaches, hyperparameters come from a grid-search procedure that maximizes the F1 metric. We varied the method's central models (or kernel for Kernel CPD), the penalty parameter and the number of change points to detect in every sequence. We provide results for the best classic model, while the detailed comparison of classic CPD approaches is in Appendix~\ref{sec:classic}.

\begin{table*}[t!]
\caption{Main quality metrics for considered CPD approaches. $\uparrow$ marks metrics we want to maximize, $\downarrow$ marks metrics we want to minimize. The results are averaged by 5 runs and are given in the format $mean\pm std$. Best values are highlighted with \textbf{bold} font, second best values are \underline{underlined}.} \label{tab:results_metrics}
\centering
\begin{tabular}{lccccc}
\hline
Method & Mean Time & Mean DD $\downarrow$ & AUC $\downarrow$ & F1 $\uparrow$ & Covering $\uparrow$ \\
      & to FA $\uparrow$ &  &  &  &  \\
\hline
\multicolumn{6}{c}{1D Synthetic data} \\
\hline
Best classic method & 94.81 & 0.64 & na & 0.9872 & \textbf{0.9954} \\
KL-CPD & \underline{95.21 $\pm$ 0.20} & 1.68 $\pm$ 0.10 & 646.44 $\pm$ 1.01 & 0.9806 $\pm$ 0.0000 & 0.9857 $\pm$ 0.0000 \\
TSCP & \textbf{104.20 $\pm$ 1.80} & 13.22 $\pm$ 2.36 & 1333.92 $\pm$ 72.96 & 0.8430 $\pm$ 0.0551 & 0.9320 $\pm$ 0.0223 \\ 
BCE & 94.49 $\pm$ 0.00 & \textbf{0.50 $\pm$ 0.02} & 606.00 $\pm$ 4.91 & \textbf{0.9904 $\pm$ 0.0000} & \underline{0.9941 $\pm$ 0.0002}
\\
InDiD (ours) & 94.41 $\pm$ 0.13 & \underline{0.54 $\pm$ 0.04} & \textbf{598.70 $\pm$ 5.00} & \underline{0.9898 $\pm$ 0.0010} & 0.9938 $\pm$ 0.0004 \\
BCE+InDiD (ours) & 94.59 $\pm$ 0.17 & 0.61 $\pm$ 0.13 & \underline{605.30 $\pm$ 5.12} & 0.9891 $\pm$ 0.0021 & 0.9936 $\pm$ 0.0006 \\
\hline
\multicolumn{6}{c}{100D Synthetic data} \\
\hline
Best classic method & 94.20 & \textbf{0.03} & na & \textbf{0.9968} & \textbf{0.9996} \\
KL-CPD & \underline{98.13 $\pm$ 1.79} & 5.85 $\pm$ 1.85 & 782.84 $\pm$ 46.23 & 0.9201 $\pm$ 0.0308 & 0.9557 $\pm$ 0.0009 \\
TSCP & \textbf{104.68 $\pm$ 2.26} & 15.39 $\pm$ 1.36 & 1390.71 $\pm$ 45.22 & 0.7530 $\pm$ 0.0485 & 0.9111 $\pm$ 0.0157 \\ 
BCE & 94.20 $\pm$ 0.00 & \textbf{0.03 $\pm$ 0.00} & \underline{573.39 $\pm$ 0.22} & \textbf{0.9968 $\pm$ 0.0000} & \textbf{0.9996 $\pm$ 0.0001} \\
InDiD (ours) & 94.20 $\pm$ 0.00 & \textbf{0.03 $\pm$ 0.01} & 573.47 $\pm$ 0.31 & \textbf{0.9968 $\pm$ 0.0000} & \textbf{0.9996 $\pm$ 0.0001} \\
BCE+InDiD (ours) & 94.20 $\pm$ 0.00 & \textbf{0.03 $\pm$ 0.00} & \textbf{573.29 $\pm$ 0.16} & \textbf{0.9968 $\pm$ 0.0000} & \textbf{0.9996 $\pm$ 0.0000} \\
\hline
\multicolumn{6}{c}{Human Activity Recognition} \\
\hline
Best classic method & 5.90 & \textbf{0.07} & na & 0.6703 & 0.8538 \\
KL-CPD & 10.29 $\pm$ 0.16 & 1.00 $\pm$ 0.00 & 52.12 $\pm$ 0.79 & 0.9413 $\pm$ 0.0065 & 0.8856 $\pm$ 0.0065 \\
TSCP & 10.54 $\pm$ 0.09 & 2.71 $\pm$ 0.09 & 69.53 $\pm$ 0.53 & 0.9168 $\pm$ 0.0032 & 0.7814 $\pm$ 0.0677 \\ 
BCE & 11.00 $\pm$ 0.16 & \underline{0.20 $\pm$ 0.08} & 43.22 $\pm$ 2.03 & \underline{0.9886 $\pm$ 0.0068} & 0.9851 $\pm$ 0.0125 \\
InDiD (ours) & \underline{11.36 $\pm$ 0.13} & 0.32 $\pm$ 0.08 & \textbf{41.54 $\pm$ 0.23} & 0.9851 $\pm$ 0.0050 & \underline{0.9975 $\pm$ 0.0013} \\
BCE+InDiD (ours) & \textbf{11.42 $\pm$ 0.02} & 0.36 $\pm$ 0.00 & \underline{41.57 $\pm$ 0.11} & \textbf{0.9893 $\pm$ 0.0006} & \textbf{0.9992 $\pm$ 0.0006} \\
\hline
\multicolumn{6}{c}{Sequences of MNIST images} \\
\hline
Best classic method & 43.02 & 4.08 & na & 0.5500 & 0.8364 \\
KL-CPD & \textbf{49.03 $\pm$ 1.60} & 6.18 $\pm$ 1.21 & 219.65 $\pm$ 2.64 & 0.6522 $\pm$ 0.0306 & 0.8543 $\pm$ 0.0060 \\
TSCP & \underline{46.87 $\pm$ 1.09} & 5.75 $\pm$ 0.67 & 295.58 $\pm$ 1.85 & 0.6923 $\pm$ 0.0205 & 0.8494 $\pm$ 0.0025 \\ 
BCE & 44.94 $\pm$ 0.04 & 3.37 $\pm$ 0.23 & 237.94 $\pm$ 8.54 & \textbf{0.9862 $\pm$ 0.0034} & 0.9120 $\pm$ 0.0047 \\
InDiD (ours) & 44.86 $\pm$ 0.32 & \underline{2.15 $\pm$ 0.68} & \underline{213.76 $\pm$ 10.77} & \underline{0.9026 $\pm$ 0.0285} & \underline{0.9392 $\pm$ 0.0012} \\
BCE+InDiD (ours) & 44.66 $\pm$ 0.06 & \textbf{1.47 $\pm$ 0.24} & \textbf{202.19 $\pm$ 2.07} & 0.8866 $\pm$ 0.0217 & \textbf{0.9527 $\pm$ 0.0047} \\
\hline
\multicolumn{6}{c}{Explosions} \\
\hline
KL-CPD & \underline{15.64 $\pm$ 0.18} & 0.25 $\pm$ 0.04 & 1.51 $\pm$ 0.30 & 0.2785 $\pm$ 0.0381 & 0.97045 $\pm$ 0.0069 \\
TSCP & 15.07 $\pm$ 0.13 & \underline{0.15 $\pm$ 0.01} & 1.35 $\pm$ 0.09 & 0.3481 $\pm$ 0.0252 & 0.9515 $\pm$ 0.0062 \\ 
BCE & 14.74 $\pm$ 0.47 & \textbf{0.13 $\pm$ 0.03} & 0.82 $\pm$ 0.14 & 0.3094 $\pm$ 0.0881 & 0.9728 $\pm$ 0.0084 \\
InDiD (ours) & \textbf{15.73 $\pm$ 0.11} & 0.20 $\pm$ 0.04 & \textbf{0.52 $\pm$ 0.13} & \textbf{0.5298 $\pm$ 0.0482} & \textbf{0.9858 $\pm$ 0.0013} \\
BCE+InDiD (ours) & 15.35 $\pm$ 0.24 & 0.18 $\pm$ 0.03 & \underline{0.80 $\pm$ 0.12} & \underline{0.3839 $\pm$ 0.0353} & \underline{0.9802 $\pm$ 0.037} \\
\hline
\multicolumn{6}{c}{Road Accidents } \\
\hline
KL-CPD & \textbf{15.26 $\pm$ 0.53} & 0.91 $\pm$ 0.08 & 7.29 $\pm$ 0.90 & 0.1060 $\pm$ 0.0462 & 0.9220 $\pm$ 0.0020 \\
TSCP & 14.43 $\pm$ 0.17 & 0.67 $\pm$ 0.05 & 6.15 $\pm$ 0.31 & 0.2348 $\pm$ 0.0290 & 0.8999 $\pm$ 0.0044 \\ 
BCE & 13.83 $\pm$ 0.54 & \underline{0.53 $\pm$ 0.10} & 4.32 $\pm$ 0.74 & 0.1760 $\pm$ 0.0650 & \underline{0.9264 $\pm$ 0.0143} \\
InDiD (ours) & \underline{14.96 $\pm$ 0.31} & 0.66 $\pm$ 0.08 & \textbf{3.11 $\pm$ 0.09} & \textbf{0.2865 $\pm$ 0.0279} & \textbf{0.9349 $\pm$ 0.0064} \\
BCE+InDiD (ours) & 13.87 $\pm$ 0.52 & \textbf{0.51 $\pm$ 0.09} & \underline{4.14 $\pm$ 0.88} & \underline{0.2476 $\pm$ 0.0166} & 0.9173 $\pm$ 0.0063 \\
\hline
\end{tabular}
\end{table*}

\begin{figure*}[!ht] 
\centering
\begin{subfigure}{.28\textwidth}
 \centering
 \includegraphics[width=\linewidth]{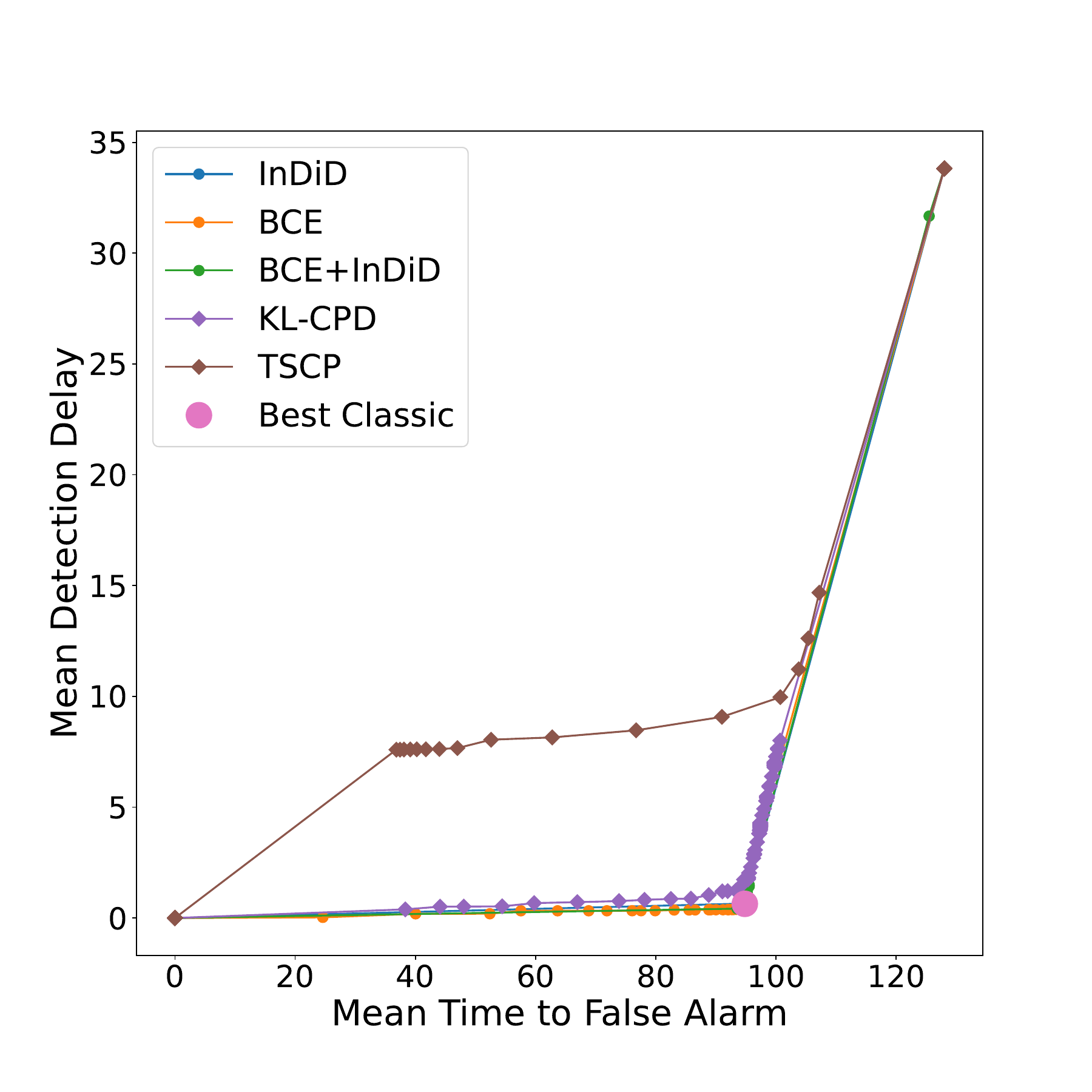}
 \subcaption{Normal 1D}
\end{subfigure}%
\begin{subfigure}{.28\textwidth}
 \centering
 \includegraphics[width=\linewidth]{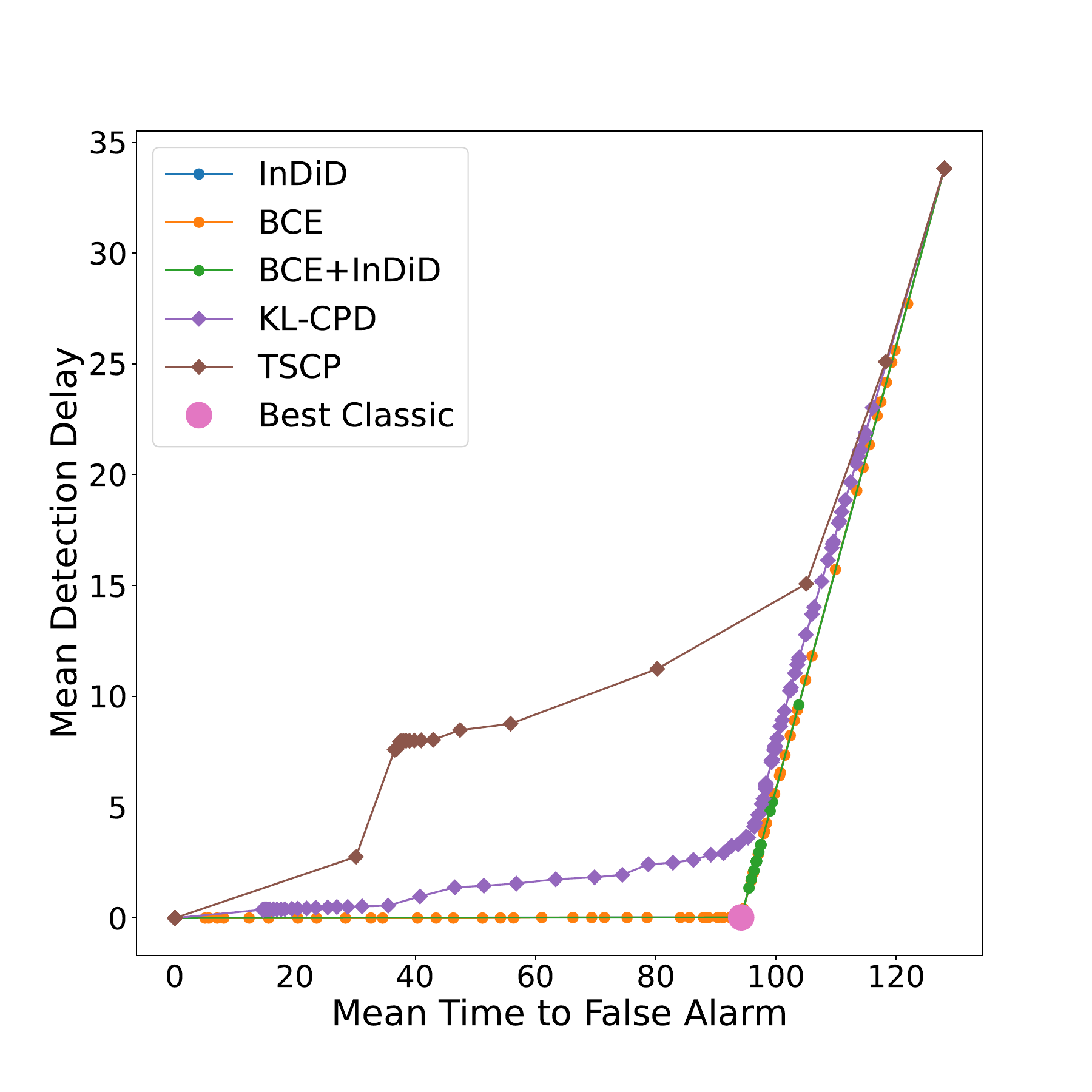}
 \subcaption{Normal 100D}
\end{subfigure}
\begin{subfigure}{.28\textwidth}
 \centering
 \includegraphics[width=\linewidth]{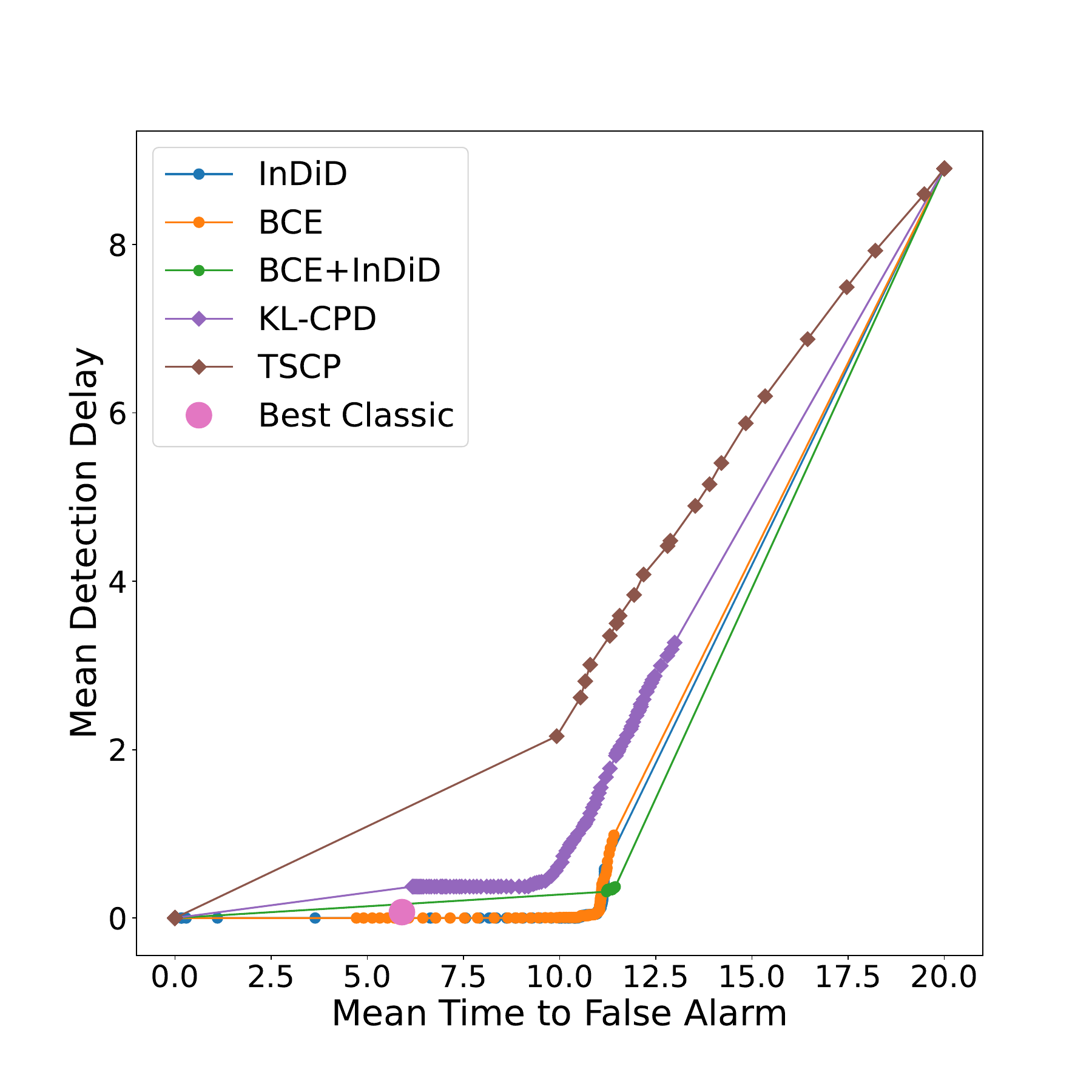}
 \subcaption{Human Activity}
\end{subfigure}
\\
\begin{subfigure}{.28\textwidth}
 \centering
 \includegraphics[width=\linewidth]{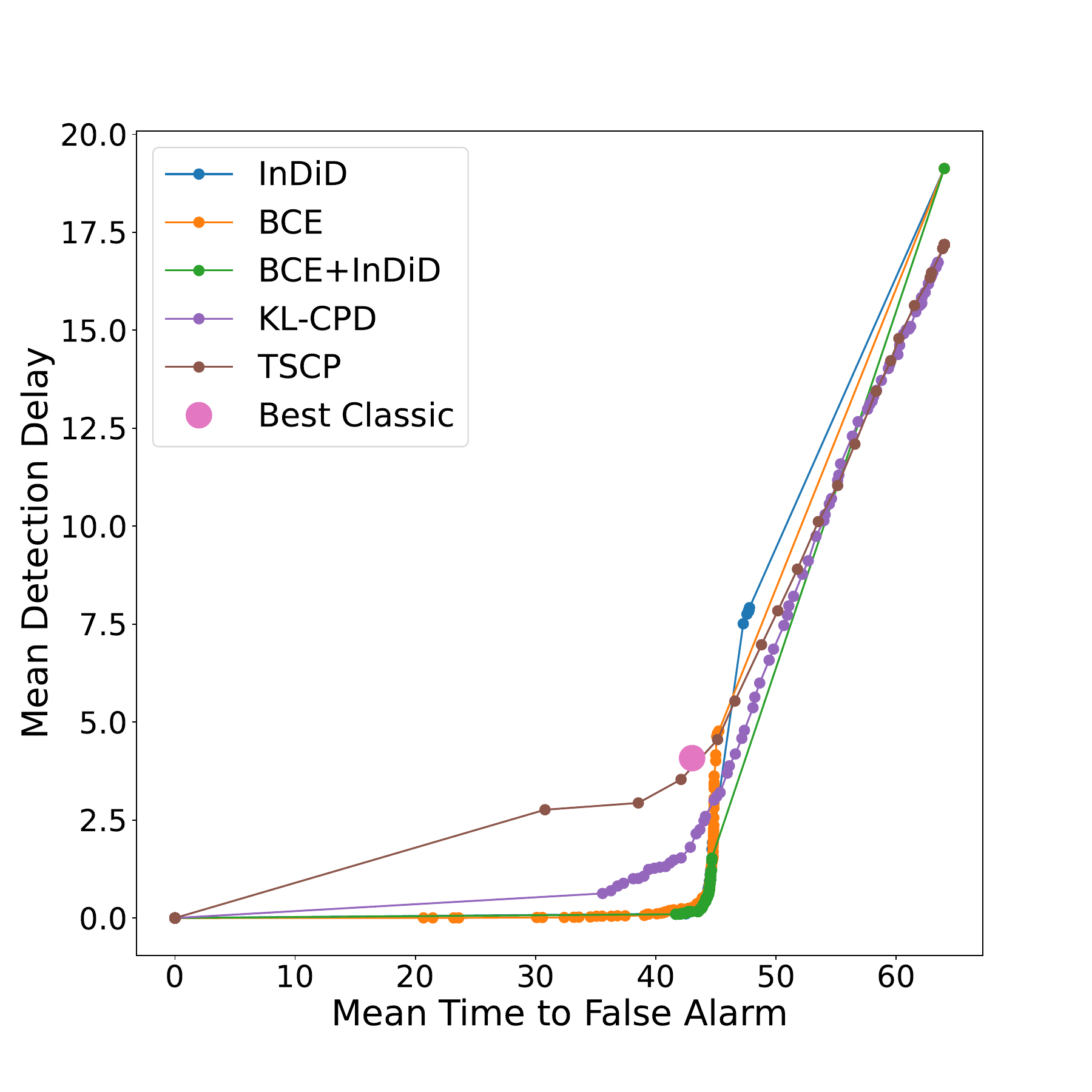}
 \subcaption{MNIST}
\end{subfigure}%
\begin{subfigure}{.28\textwidth}
 \centering
 \includegraphics[width=\linewidth]{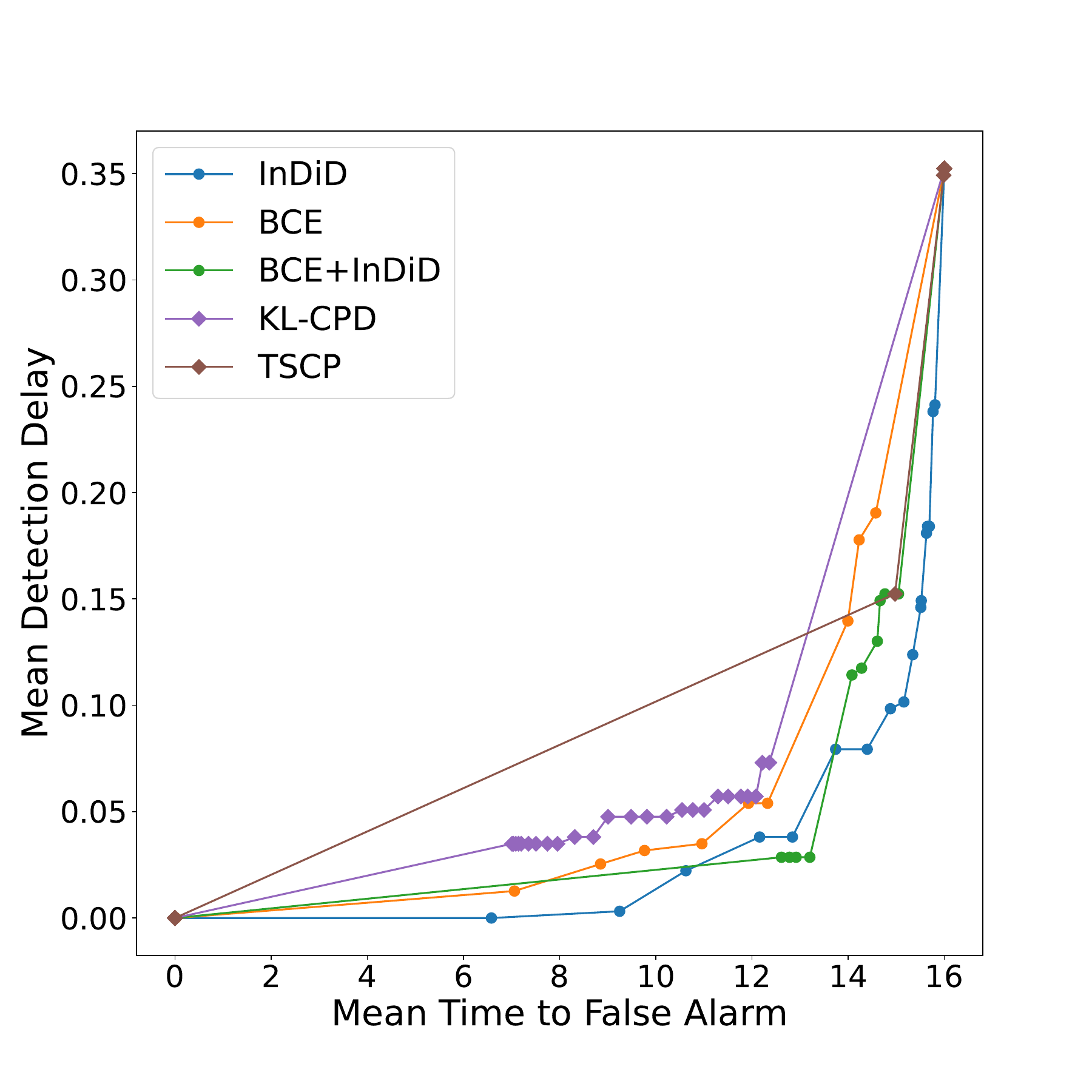}
 \subcaption{Explosion}
\end{subfigure}
\begin{subfigure}{.28\textwidth}
 \centering
 \includegraphics[width=\linewidth]{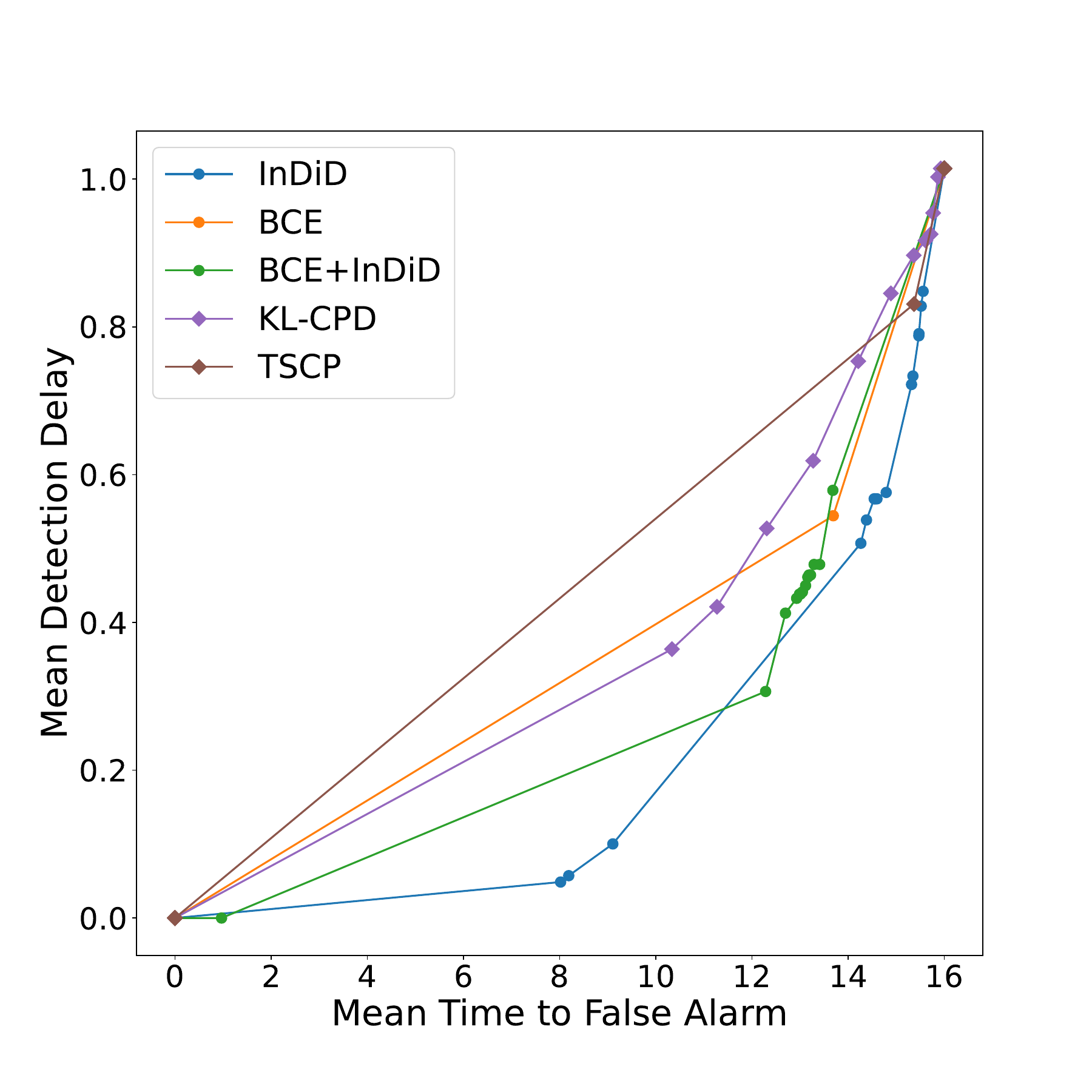}
 \subcaption{Road accidents}
\end{subfigure}
\caption{The performance detection curves for various datasets. If a method does not have a threshold to select, we plot a single point corresponding to its performance. The lower and righter the curve lies, the better. Thus, the models trained with our losses have better results on all types of data except very simple cases where the performance is similar.}
\label{fig:curves}
\end{figure*}

\subsection{Datasets}

Our datasets have various complexity: from multivariate data from sensors to video datasets. Detailed information on data and their preprocessing are given in Appendix~\ref{sec:data_details}.

\textbf{Synthetic sequences.} We start with two toy examples: 1D and 100D Gaussians with a change in mean and without it. 
We also generate a Synthetic 1D dataset with the number of change points ranging from $0$ to $9$ to check how different approaches perform in multiple change-point detection scenarios.

\textbf{Human Activity Dataset.} As a dataset with numerical sequences, we use the USC-HAD dataset \cite{zhang2012usc} with $12$ types of human activities.  We sample subsequences with changes in the type of human activity and without them. 
Each subsequence consists of $28$ selected measurements during $20$ time ticks. 

\textbf{Sequences of MNIST images.} We generate another dataset based on MNIST~\cite{lecun-mnisthandwrittendigit-2010} images. With a Conditional Variational Autoencoder (CVAE), ~\cite{sohn2015learning} we construct sequences with and without a change. Normal sequences start and end with two images from similar classes that smoothly transform one into another. Sequences with a change start and end with ones from different classes.
We generated a balanced dataset of 1000 sequences with a length 64.

\textbf{Explosions and Car Accidents.} UCF-Crime is a video dataset for anomaly labelling of  sequences~\cite{sultani2018real}. It consists of real-world $240\times 320$ videos, with 13 realistic anomaly types such as explosions, road accidents, burglary, etc., and normal examples.
We consider two types of anomalies that correspond to disorders: explosions and car accidents.
\emph{Explosions} lead not to a point anomaly but to long-lasting consequences with a distribution change. 
\emph{Road Accidents} follow a similar pattern but are more challenging. We use it to test the limit of applicability of our approaches. The number of frames in considered sequences is $16$.
We carefully labelled frames related to the change point and provided this labelling along with our code. Thus, these data are ready to use by a community.

\begin{figure}[!ht]
\centering
 \includegraphics[width=0.6\linewidth]{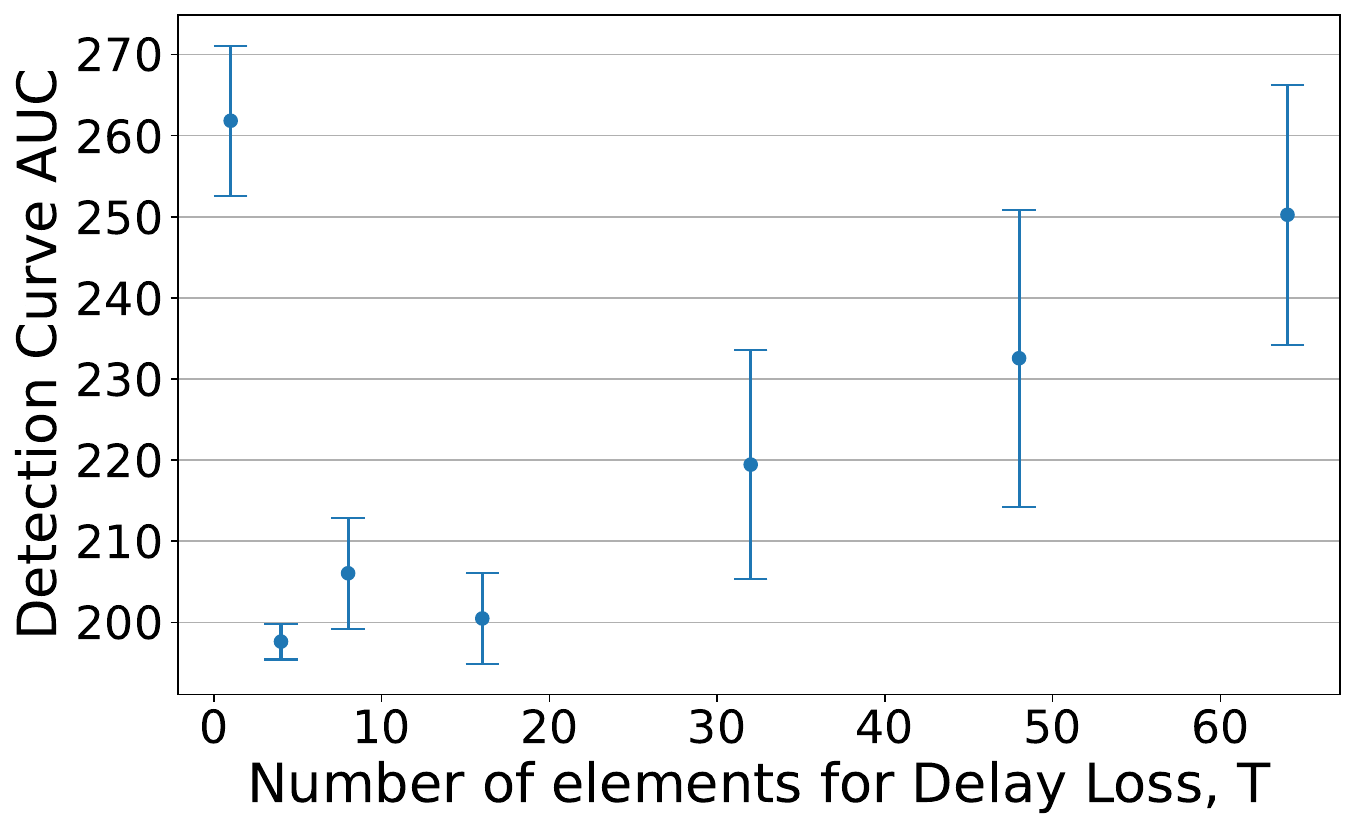}
\caption{The model's performance dependence on the number of elements for Delay Loss for MNIST data for InDiD averaged over $3$ runs.}
\label{fig:alphas_and_ws}
\end{figure}

\begin{figure*}[h!]
\centering
\begin{subfigure}{.3\textwidth}
 \centering
 \includegraphics[width=0.9\linewidth]{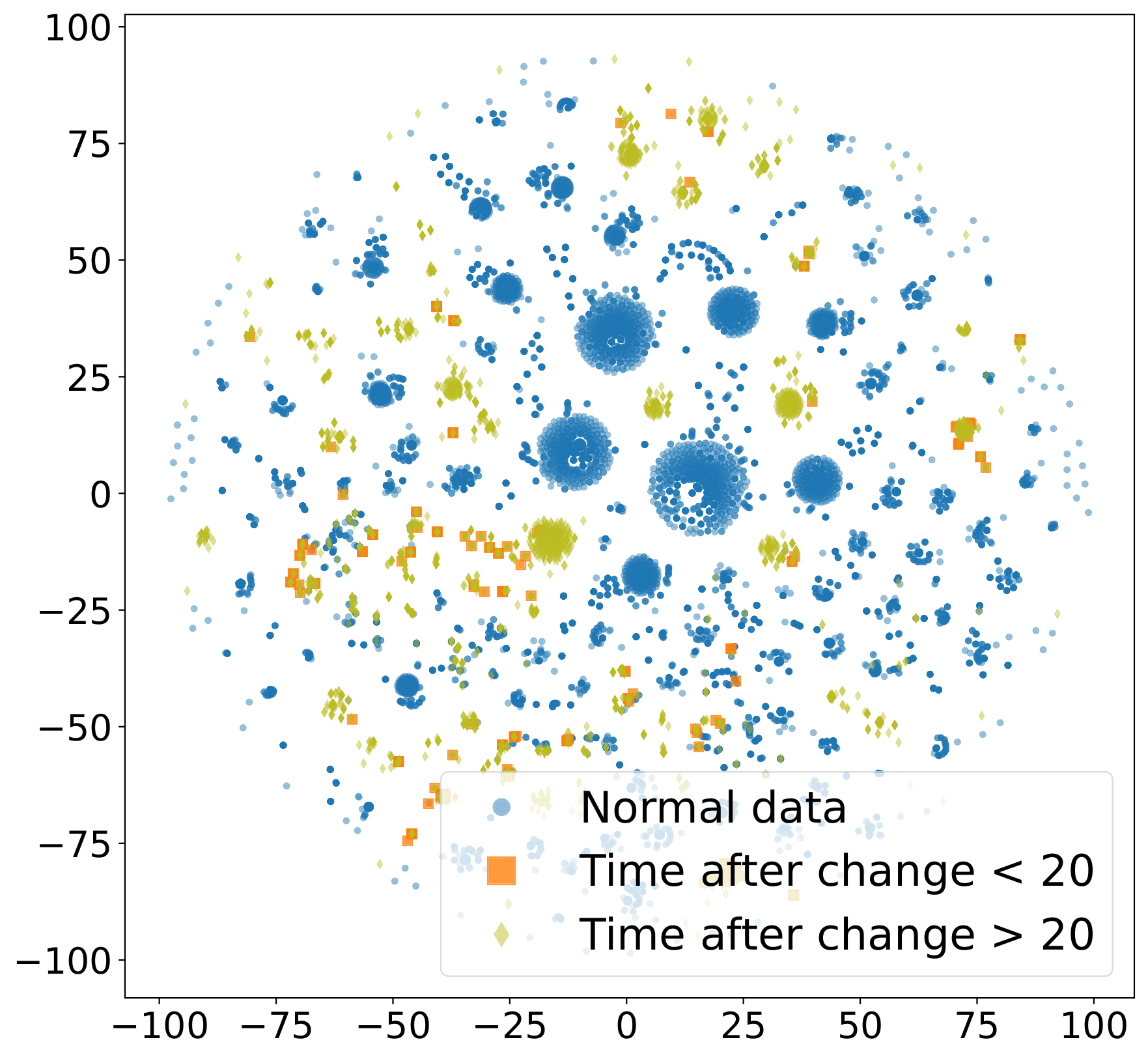}
\end{subfigure}%
\begin{subfigure}{.3\textwidth}
 \centering
 \includegraphics[width=0.9\linewidth]{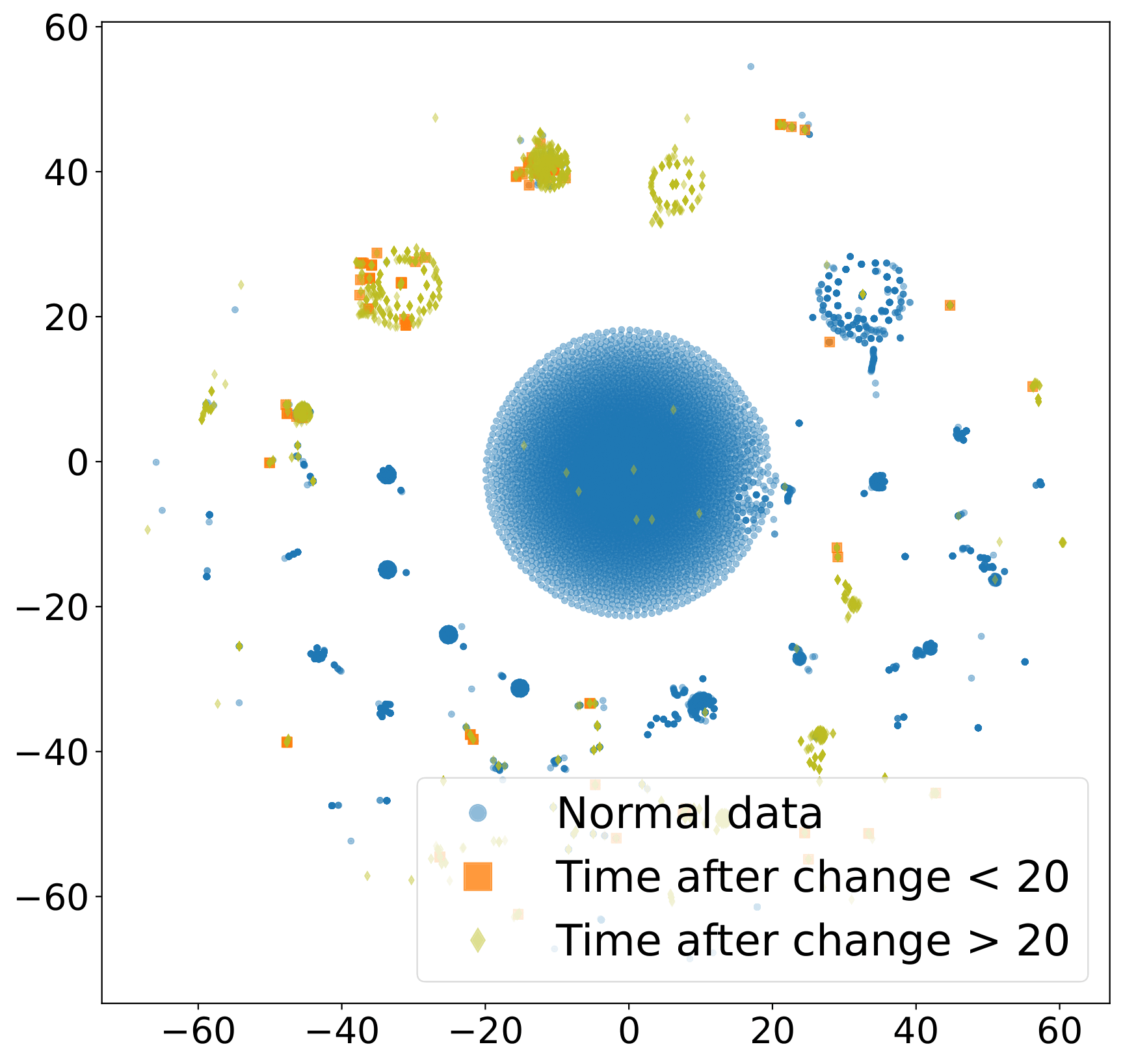}
\end{subfigure}
\begin{subfigure}{.3\textwidth}
 \centering
 \includegraphics[width=0.9\linewidth]{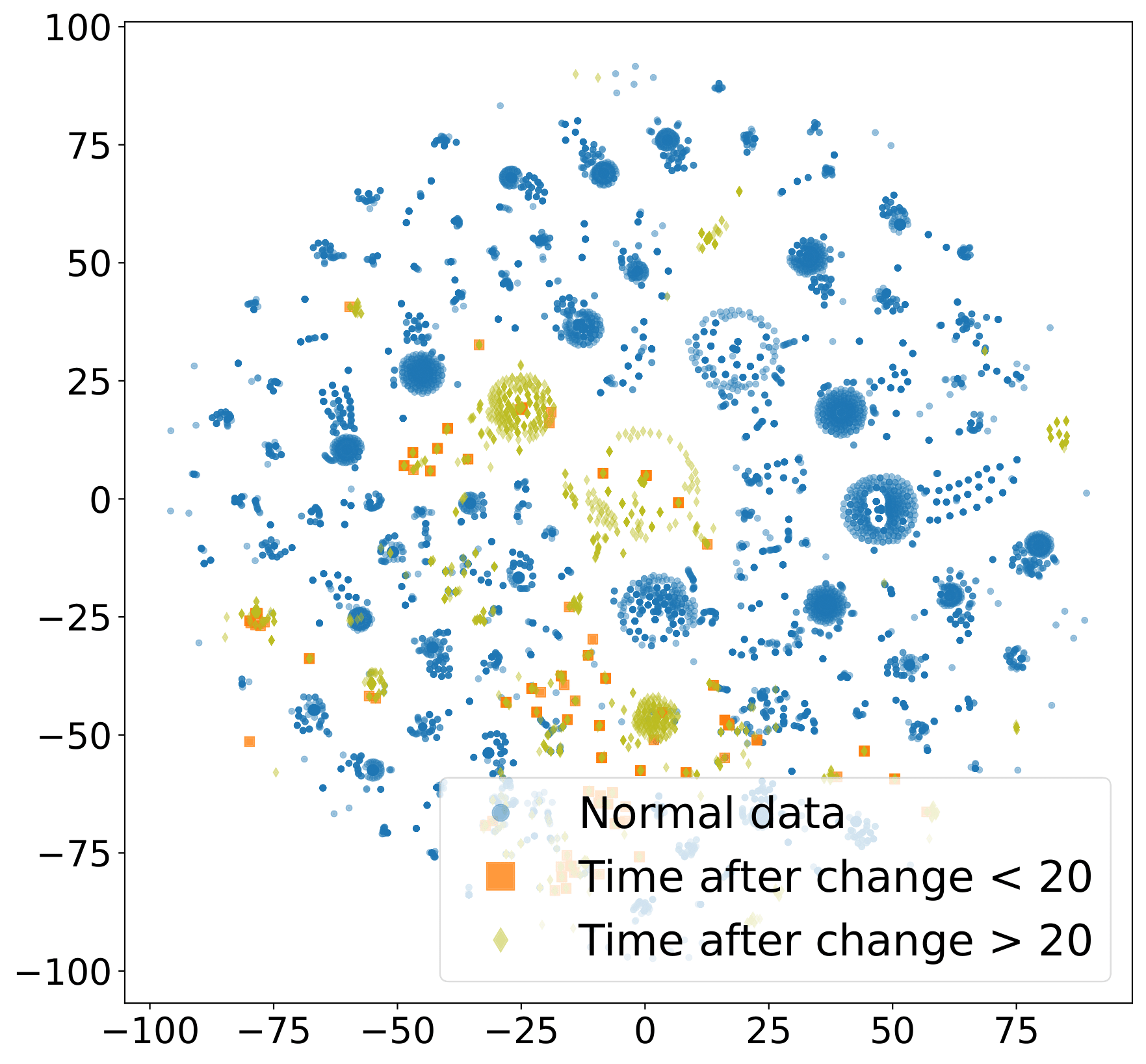}
\end{subfigure}
\caption{tSNE for embeddings obtained via approaches with the binary cross-entropy loss (left), InDiD with our loss (center), and combined InDiD+BCE approach (right).}
\label{fig:tsne_embeddings}
\end{figure*}

\subsection{Benchmarks and Metrics}
\label{sec:metrics}

\paragraph{Classification quality metrics.} 
For our problem statement, elements of the confusion matrix have the following meaning.
For True Positive (TP), we correctly detect a change no earlier than it appears. 
True Negative (TN) indicates that we rightly do not detect anything. 
False Negative (FN) defines a case when we miss the disorder while it appears in the real data. The most significant is the False Positive (FP). The obvious case of FP is predicting changes for normal signals. In addition, we detect the FP when the model predicts change before it appears in a sequence with the real disorder.
Based on these metrics, we calculate $F1 = \frac{TP}{TP + 0.5(FP + FN)}$. Our estimation is slightly different from the common F1 from~\cite{van2020evaluation}, as we penalize premature detections. We expect that such evaluation leads to better real-life CPD application estimation, as there is no reason for the model to worry about a signal that comes from a normal regime. 

\paragraph{Detection Delay and Time to False Alarm.} For a quantitative comparison of CPD approaches, we concentrate on typical metrics for change point 
detection: Detection Delay and Time to False Alarm depicted in  Figure~\ref{fig:metrics_figures}. We want to meet two competing goals: minimize the Detection Delay and maximize the Time to False Alarm. 
The Detection Delay $(\tau - \theta)^+$ is the difference between the true change point $\theta$ and the model’s prediction of change point $\tau$. The Time to False Alarm $\tau$ is the difference in time between a first false positive prediction if it exists and the sequence start. 

\paragraph{Area under the detection curve (AUC).} We warn about the change the first time when the change probability prediction by model $p_t$ exceeds a threshold $s$. So, varying $s$, we obtain different trade-offs between mean detection delay ($x$-axis) and mean time to a false alarm ($y$-axis) and get the detection curve. We want to find an approach that minimizes the area under this curve.
As this metric provides an overall performance evaluation, we consider it as the main quality metric for comparison.

\paragraph{Covering metric}

As a field-specific metric for CPD, we also examine Covering metric presented in~\cite{van2020evaluation}. It follows evaluation for the image segmentation problem, as we look at similarity of sequence partitions based on a predicted partition $G'$ and a true one $G$, i.e. $ \mathrm{Covering}(G, G') = \dfrac{1}{T} \sum_{A \in G} |A| \max_{A' \in G'} \frac{|A \cap A'|}{|A \cup A'|}$. In our case, we consider partition to parts before and after a change point. The higher the Covering is, the better the performance is.

\subsection{Main results}

Table~\ref{tab:results_metrics} presents metrics for various datasets. We consider the most important metrics F1 and AUC, as well as the other metrics. 

Our proposed methods outperform the baselines in terms of these metrics while having a pretty low and even the lowest detection delay. It is important to remember that KL-CPD and TSCP have predetermined detection delays regarding window size due to the methods' nature. So, these methods often have a greater Mean Detection Delay. In addition, these methods seem to produce many false positives that can be identified from a huge Mean Time to False Alarm but low F1 scores. For simple data like Synthetic 1D and Synthetic 100D, the results of all the methods, even classic, are similar to each other. With increasing data complexity, the baselines start to degrade. 

A similar conclusion on how threshold selection affects quality metrics for considered change detection approaches is provided by Figure~\ref{fig:curves}. 

\subsection{Model understanding}
\paragraph{Representations analysis}

To check the quality of obtained representations, we get tSNE embeddings for different methods and draw them in Figure~\ref{fig:tsne_embeddings}. We show embeddings for MNIST data before, immediately after a change point and later on. The model trained with BCE loss does not bother about fast change detection. Embeddings for points immediately after the change are close to those before the change. Thus, we have a significant detection delay.
For our InDiD approach, embeddings for figures after a change rapidly move to a separate region, making change point detection faster and more accurate. Using a combined BCE+InDiD approach allows us to leverage both method benefits and obtain more diverse representations via BCE loss but separated for faster change detection via InDiD. 

\paragraph{Number of terms in loss}
We consider how many terms we need to obtain a reasonable approximation of the true Lagrangian that leads to a good model.
Our hypothesis (see ~\ref{sec:methods}) confirms. As we see in Figure~\ref{fig:alphas_and_ws}, the proposed approximation is pretty accurate, and we do not need a large number of components in it. 
On the other hand, the inclusion of a larger number of components does not harm the numerical stability of our approach.

\paragraph{Multiple CPD}
Additionally, we investigated the performance of the considered models on synthetic 1D datasets with multiple change points (see Table~\ref{table:multiple_metrics}). We train various models on the original Synthetic 1D data with one change in the distribution and then apply the models to the sequences with multiple disorders. 
InDiD still outperforms the same model with BCE loss and the other baselines.

\begin{table}[!ht]
    \centering
    \caption{Maximum value of Covering on a synthetic 1D dataset with multiple changes is for our InDiD approach.}
    \label{table:multiple_metrics}
    \begin{tabular}{cccc}
        \hline
        Method & Covering $\uparrow$ & Method & Covering $\uparrow$ \\
        \hline
        BinSeg & 0.9187 & BCE & 0.9533 \\
        KernelCPD & 0.9536 & InDiD (ours) & \textbf{0.9891} \\
        \hline
    \end{tabular}
\end{table}

\section{Conclusions and limitations}

We proposed a principled loss function that allows quick change detection with a low number of false alarms for semi-structured sequential data. 
Our loss function is a differentiable lower bound for the fundamental CPD literature criteria. 
Thus, it allows end2end training of neural network models for representation learning.
Experimental evidence suggests that such an approach leads to an overall model improvement.
We observe this in various scenarios, including operating with data from multivariate sensors and video streams.

On the other hand, existing approaches have some limitations.
There is a need for a reasonably large labelled dataset to train decent models, and the usage of pre-trained general-purpose representations partially solves this problem.
We can try to apply semi-supervised methods that utilize large amounts of available unlabelled data.
Also, in applications, classic score smoothing methods can help to maintain the numerical stability of CPD models, making a combination of these two ideas a better performer.

%%
%% The acknowledgments section is defined using the "acks" environment
%% (and NOT an unnumbered section). This ensures the proper
%% identification of the section in the article metadata, and the
%% consistent spelling of the heading.
\begin{acks}
    The research was supported by the Russian Science Foundation grant 20-71-10135. We also thank Alexander Korotin for useful discussions and critical notes during the paper writing.
\end{acks}

%%
%% The next two lines define the bibliography style to be used, and
%% the bibliography file.
\bibliographystyle{ACM-Reference-Format}
\balance
\bibliography{cpd_lib}

%%
%% If your work has an appendix, this is the place to put it.
\appendix
%%%%%%%%%%%%%%%%%%%%%%%%%%%%%%%%%%%%%%%%%%%%%%%%%%%%%%%%%%%%

\appendix

\section{Appendix}

The supplementary materials contain proofs of suggested theorems in Section~\ref{sec:proofs}, results on additional experiments including details on classic CPD methods in subsection~\ref{sec:classic}, performance on unbalanced datasets in subsection~\ref{sec:balanced} and information about inference time in subsection~\ref{sec:inference_time}.
We also provide details about data preprocessing and data examples in subsection~\ref{sec:data_details}, and implementation details in subsection~\ref{sec:implementation}.

\subsection{Proofs}
\label{sec:proofs}

We provide the proof that our loss function is a lower bound of the principled Lagrange function. We start with two supplementary lemmas that lead to the main theorem.

\begin{lemma}
\label{lemma:delay_lemma}
$\tilde{L}^{\tS}_{delay}(f_\vecW, D)$ is a lower bound for the expected value of the detection delay. 
\end{lemma}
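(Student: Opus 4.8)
The plan is to prove the inequality one sequence at a time and then average, since $\tilde{L}^{\tS}_{delay}(f_\vecW,D)$ is just the empirical mean of the per-sequence terms $\tilde{L}^{\tS}_{delay}(f_\vecW,X_i,\theta_i)$ and expectation is linear. Fix a sequence $X_i$ with a genuine change point $\theta_i\le T$ (a sequence with $\theta_i=\infty$ carries no detection delay and does not enter this term). First I would spell out the probabilistic meaning of scenario~(B): from time $\theta_i$ onward the detector announces a change at time $t$ with probability $p^i_t$, independently across $t$; writing $\tau$ for the first announcement at or after $\theta_i$, one has $\PP(\tau=t)=p^i_t\prod_{k=\theta_i}^{t-1}(1-p^i_k)$ for $\theta_i\le t\le T$ and $\PP(\tau>T)=\prod_{k=\theta_i}^{T}(1-p^i_k)$. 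Assigning a run that is never flagged within the horizon the (minimal reasonable) delay $T+1-\theta_i$, the expected detection delay for this sequence is
\[
  \mathrm{EDD}_i=\sum_{t=\theta_i}^{T}(t-\theta_i)\,p^i_t\prod_{k=\theta_i}^{t-1}(1-p^i_k)+(T+1-\theta_i)\prod_{k=\theta_i}^{T}(1-p^i_k),
\]
which is exactly $\tilde{L}^{T}_{delay}(f_\vecW,X_i,\theta_i)$; any other convention for the never-fire event only makes the true delay larger, so $\tilde{L}^{T}_{delay}(f_\vecW,X_i,\theta_i)\le\mathrm{EDD}_i$ in general.

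The second step is to show that $\tilde{L}^{\tS}_{delay}(f_\vecW,X_i,\theta_i)$ is nondecreasing in $\tS$, so that its value for any $\tS\le T$ is at most its value at $\tS=T$. This reduces to comparing consecutive truncations: abbreviating $P=\prod_{k=\theta_i}^{\tS}(1-p^i_k)\ge0$, $q=p^i_{\tS+1}\in[0,1]$ and $m=\tS+1-\theta_i$, passing from $\tS$ to $\tS+1$ replaces the boundary term $mP$ by $m q P+(m+1)(1-q)P=(m+1-q)P$, and $(m+1-q)P-mP=(1-q)P\ge0$. Hence $\tilde{L}^{\tS}_{delay}(f_\vecW,X_i,\theta_i)\le\tilde{L}^{T}_{delay}(f_\vecW,X_i,\theta_i)\le\mathrm{EDD}_i$ for every $\tS\le T$; equivalently, as $\tS$ grows $\tilde{L}^{\tS}_{delay}$ increases to the exact expected delay, so every finite truncation is a lower bound. (The bound can also be seen directly: with $A=\sum_{t=\theta_i}^{\tS}(t-\theta_i)p^i_t\prod_{k=\theta_i}^{t-1}(1-p^i_k)$ the common head, the leftover $\mathrm{EDD}_i-A$ equals $\EE[(\tau-\theta_i)\mathbf{1}\{\tau>\tS\}]\ge(\tS+1-\theta_i)\,\PP(\tau>\tS)=(\tS+1-\theta_i)P$, which is precisely the boundary term of $\tilde{L}^{\tS}_{delay}$.)

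Finally I would average over $i=1,\dots,N$: $\tilde{L}^{\tS}_{delay}(f_\vecW,D)=\frac1N\sum_{i}\tilde{L}^{\tS}_{delay}(f_\vecW,X_i,\theta_i)\le\frac1N\sum_i\mathrm{EDD}_i$, i.e.\ the loss term is at most the dataset-averaged expected detection delay, which is the claim; note the per-realization bound uses only $0\le p^i_k\le1$, so it is unaffected if one further takes an expectation over the data. I expect the only real subtlety to be the first step --- pinning down what ``the expected detection delay'' means under scenario~(B), in particular the accounting for sequences never flagged before the horizon $T$ and for sequences with no change at all. Once the identity $\tilde{L}^{T}_{delay}=\mathrm{EDD}_i$ (or the inequality $\le$) is in place, the remainder is the elementary monotonicity computation above together with linearity of expectation.
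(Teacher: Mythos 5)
Your proposal is correct, and its parenthetical ``direct'' bound is in fact the paper's own argument: the paper writes the expected delay as the infinite series $\sum_{t=\theta_i}^{\infty}(t-\theta_i)p^i_t\prod_{k=\theta_i}^{t-1}(1-p^i_k)$, splits it at $\tS$, lower-bounds the tail using $(t-\theta_i)\geq \tS+1-\theta_i$, and identifies the remaining tail mass with the survival probability $\prod_{k=\theta_i}^{\tS}(1-p^i_k)$. Your main route packages this differently: you first recognize $\tilde{L}^{T}_{delay}(f_\vecW,X_i,\theta_i)$ as an \emph{exact} expected delay under a finite-horizon convention (never-fired runs charged $T+1-\theta_i$), and then prove monotonicity in $\tS$ via the one-step computation $(m+1-q)P-mP=(1-q)P\geq 0$. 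This buys two small things. First, it makes the role of the boundary term transparent. Second, it sidesteps a subtlety the paper glosses over: the paper's step $\sum_{t=\tS+1}^{\infty}p_t\prod_{k=\theta}^{t-1}(1-p_k)=\prod_{t=\theta}^{\tS}(1-p_t)$ is an equality only when detection is almost surely finite (i.e.\ $\prod_{k\geq\theta}(1-p_k)\to 0$); in general the left side is smaller, and the degenerate case (e.g.\ all $p^i_t=0$) is covered only by interpreting $\mathbb{E}_\theta(\tau-\theta)^+$ as $+\infty$ there, whereas your argument uses only inequalities and needs no such caveat. One point to keep straight: the lemma's stated target is the untruncated $\mathbb{E}_\theta(\tau-\theta)^+$ rather than your horizon-$T$ quantity $\mathrm{EDD}_i$; your chain still reaches it because, as you note, the post-$T$ tail of the true delay contributes at least $(T+1-\theta_i)\PP(\tau>T)$, so $\mathrm{EDD}_i$ is itself a lower bound of the paper's target (when the latter is finite), and the claim is vacuous otherwise.
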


\begin{proof}
The expected value of the \emph{detection delay} $\mathcal{L}_{delay}$ given the probabilities of a change point for the scenario (B) has the following form for the change point estimate $\tau$:
\begin{equation}
\label{eq:loss_delay}
\mathcal{L}_{delay}(f_\vecW, D) = \mathbb{E}_\mathbb{\theta} (\tau - \theta)^+ = 
\frac{1}{N} \sum_{i = 1}^N \sum_{t = \theta_i}^\infty (t - \theta_i) p^i_{t} \prod_{k = \theta_i}^{t - 1} (1 - p^i_{k}),    
\end{equation}
% \[
% \label{eq:loss_delay}
% \begin{split}
% &\mathcal{L}_{delay}(f_\vecW, D) = \mathbb{E}_\mathbb{\theta} (\tau - \theta)^+ = \\ 
% & \frac{1}{N} \sum_{i = 1}^N \sum_{t = \theta_i}^\infty (t - \theta_i) p^i_{t} \prod_{k = \theta_i}^{t - 1} (1 - p^i_{k}),
% \end{split}
% \]
the outer sum over $i$-s reflects that we average the loss over all sequences in a batch with change points at moments $\theta_i$; the inner sum over $t$-s corresponds to losses associated with each moment in a sequence; $p^i_t$ is the predicted probability of a change point at the moment~$t$ for $i$-th series.
The probability $p^i_{t} \prod_{k = \theta_i}^{t - 1} (1 - p^i_{k})$ reflects, that we didn't report a change point during all previous time moments, this event has the probability $\prod_{k = \theta_i}^{t - 1} (1 - p^i_{k})$, but reported a change point at time $t$ with probability $p^i_{t}$. 

The term includes an infinite number of elements. Moreover, their values decrease with each~$t$, as we multiply by quantities smaller than $1$. 
We rewrite $\mathcal{{L}}_{delay}(f_\vecW, D)$ in the following form for some~$\tS$:
\[
\begin{split}
&\mathcal{L}_{delay}(f_\vecW, D) = \frac{1}{N} \sum_{i = 1}^N \Bigg. \Bigg( \sum_{t = \theta_i}^\tS (t - \theta_i) p^i_{t} \prod_{k = \theta}^{t - 1} (1 - p^i_{k}) + \\ 
& \sum_{t = \tS + 1}^\infty (t - \theta_i) p^i_{t} \prod_{k = \theta_i}^{t - 1} (1 - p^i_{k}) \Bigg. \Bigg). 
\end{split}
\]
Consider a lower bound for the second term for a single sequence:
$ \sum_{t = \tS + 1}^\infty (t - \theta) p_t \prod_{k = \theta}^{t - 1} (1 - p_k) \geq (\tS + 1 - \theta) \sum_{t = \tS + 1}^\infty p_t \prod_{k = \theta}^{t - 1} (1 - p_k)$.
The expression under the sum introduces the probability of detecting the change moment after the moment $\tS$. 
It equals the probability of not detecting the change before moment $\tS + 1$. Thus, 
$ (\tS + 1 - \theta) \sum_{t = \tS + 1}^\infty p_t \prod_{k = \theta}^{t - 1} (1 - p_k) = (\tS + 1 - \theta) \prod_{t = \theta}^{\tS} (1 - p_t)$.
\newline
This brings us to the desired result if we sum the first part and the lower bound for the second part:
\begin{multline}
    %\label{eq:approx_loss_delay}
    \tilde{L}^{\tS}_{delay}(f_\vecW, D) = \frac{1}{N} \sum_{i = 1}^N \tilde{L}^{\tS}_{delay}(f_\vecW, X_i, \theta_i);\,\, \tilde{L}^{\tS}_{delay}(f_\vecW, X_i, \theta_i) = \\
    = \sum_{t = \theta_i}^{\tS} (t - \theta_i) p^i_{t} \prod_{k = \theta_i}^{t - 1} (1 - p^i_{k}) + (\tS + 1 - \theta_i) \prod_{k = \theta_i}^{\tS} (1 - p_{k}^i).
\end{multline}

\end{proof}

\begin{lemma}
\label{lemma:fa_lemma}
$\tilde{L}_{FA}(f_\vecW, D)$ is a lower bound for the expected value of the time to the false alarm. 
\end{lemma}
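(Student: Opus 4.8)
The plan is to re-run the argument of Lemma~\ref{lemma:delay_lemma}, now reading the change-free \emph{prefix} of each sequence as the analogue of the post-change suffix used there. First I would write down the \emph{exact} expected time to the first (necessarily false) alarm. On the interval $[0,\tilde{T}_i]$ with $\tilde{T}_i=\min(\theta_i,T)$ there is no change, so under scenario~(B) the model raises its first alarm at time $t$ with probability $p^i_{t}\prod_{k=0}^{t-1}(1-p^i_k)$ --- the same first-alarm decomposition as in Lemma~\ref{lemma:delay_lemma}, but measured from $0$ instead of from $\theta_i$: we did not signal at any earlier moment (probability $\prod_{k=0}^{t-1}(1-p^i_k)$) and then signalled at $t$ (probability $p^i_t$). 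Weighting the alarm time by this law over all $t=0,1,2,\dots$ and averaging over the batch gives
\[
\mathscr{L}_{FA}(f_\vecW,D)=\frac{1}{N}\sum_{i=1}^{N}\sum_{t=0}^{\infty} t\, p^i_{t}\prod_{k=0}^{t-1}(1-p^i_k),
\]
an infinite series whose terms decay because each carries a factor $\prod_{k<t}(1-p^i_k)\le 1$, exactly as in the delay case.

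Next I would split the inner series at $\tilde{T}_i$, keeping $\sum_{t=0}^{\tilde{T}_i}$ verbatim and bounding the tail $\sum_{t=\tilde{T}_i+1}^{\infty}$ from below. Since every summand there has $t\ge \tilde{T}_i+1$,
\[
\sum_{t=\tilde{T}_i+1}^{\infty} t\, p^i_{t}\prod_{k=0}^{t-1}(1-p^i_k)\ \ge\ (\tilde{T}_i+1)\sum_{t=\tilde{T}_i+1}^{\infty} p^i_{t}\prod_{k=0}^{t-1}(1-p^i_k),
\]
and --- under the same convention used in the proof of Lemma~\ref{lemma:delay_lemma}, that the model a.s.\ eventually signals so the first-alarm probabilities sum to one --- the remaining series equals the probability of raising no false alarm on $[0,\tilde{T}_i]$, i.e.\ $\prod_{k=0}^{\tilde{T}_i}(1-p^i_k)$ (the complement of $\sum_{t=0}^{\tilde{T}_i}p^i_t\prod_{k=0}^{t-1}(1-p^i_k)=1-\prod_{k=0}^{\tilde{T}_i}(1-p^i_k)$, which telescopes). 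Adding the kept head to this lower bound for the tail yields
\[
\mathscr{L}_{FA}(f_\vecW,D)\ \ge\ \frac{1}{N}\sum_{i=1}^{N}\Bigl(\sum_{t=0}^{\tilde{T}_i} t\,p^i_t\prod_{k=0}^{t-1}(1-p^i_k)+(\tilde{T}_i+1)\prod_{k=0}^{\tilde{T}_i}(1-p^i_k)\Bigr),
\]
which is $\tilde{L}_{FA}(f_\vecW,D)$ (up to the sign with which one charges the boundary event ``no false alarm within the window''; with either sign the displayed right-hand side is still a lower bound for $\mathscr{L}_{FA}$). Together with Lemma~\ref{lemma:delay_lemma} and $c\ge 0$ this gives part~(1) of Theorem~\ref{main_theorem} at once, since $\tilde{L}^{h}(f_\vecW,D,c)=\tilde{L}^{h}_{delay}-c\,\tilde{L}_{FA}\le \mathbb{E}_\theta(\tau-\theta)^{+}-c\,\mathbb{E}_\theta(\tau\mid\tau<\theta)=\mathscr{\tilde{L}}(\tau)$.

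The step I expect to be the main obstacle is not an estimate but the bookkeeping for sequences that contain no change point: there $\theta_i=\infty$, all of $[0,T]$ is change-free, and this is exactly why the truncation point is $\tilde{T}_i=\min(\theta_i,T)$ rather than $\theta_i$; one must check that the telescoping identity for the tail probabilities still collapses to $\prod_{k=0}^{\tilde{T}_i}(1-p^i_k)$ in that case and that the convergence convention (first-alarm probabilities summing to one) remains compatible with it. Apart from this, the argument is the mirror image of Lemma~\ref{lemma:delay_lemma}: a single truncation of an infinite series together with the elementary bound $t\ge \tilde{T}_i+1$ on the tail, with no further approximation required.
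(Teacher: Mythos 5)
Your argument is essentially the paper's own proof: the paper states the exact expected time to false alarm as the same first-alarm series, truncates at $\tilde{T}_i=\min(\theta_i,T)$, and simply remarks that the rest "is similar to" Lemma~\ref{lemma:delay_lemma}, which is exactly the truncation-plus-telescoping tail bound you spell out (including the convention that the first-alarm probabilities sum to one). Your parenthetical about the sign of the boundary term is apt, since the paper itself is inconsistent between the main-text and appendix forms of $\tilde{L}_{FA}$, but this does not change the substance: your proposal matches the intended proof.
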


 The expected \emph{time to false alarm} $\mathcal{L}_{FA}$ given the probabilities for the scenario (B) has the following form:
 \begin{equation}
 \label{eq:loss_fp}
\mathcal{L}_{FA}(f_\vecW, D) = -\mathbb{E}_\theta(\tau |\  \tau<\theta) =  
-\frac{1}{N}\sum_{i = 1}^N \sum_{t = 0}^{\theta_i} t p^i_{t} \prod_{k = 0}^{t - 1} (1 - p^i_{k}).
 \end{equation}

The infinite sum here are similar to that in~\eqref{eq:loss_delay} in case $\theta_i = \infty$ --- when no change point happens. So, our approximation for $\mathcal{L}_{FA}$ is:

\begin{equation}\label{eq:loss_fp_approx}
\tilde{L}_{FA}(f_\vecW, D) = \frac{1}{N}\sum_{i = 1}^N \Bigg. \Bigg( (\tilde{T_i} + 1) \prod_{k = 0}^{\tilde{T_i}} (1 - p^i_{k}) - \sum_{t = 0}^{\tilde{T_i}} t p^i_{t} \prod_{k = 0}^{t - 1} (1 - p^i_{k}) \Bigg. \Bigg), 
\end{equation}
where $\tilde{T_i} = \min(\theta_i, T)$. 
The proof is similar to the one from \ref{lemma:delay_lemma}.

\begin{theorem}
The loss function $\tilde{L}^h(f_\vecW, D, c)$ from ~\eqref{eq:final_loss} is a lower bound for $\mathcal{\tilde{L}}(\tau)$ from criteria~\eqref{eq:criteria_lagrange}. 
\end{theorem}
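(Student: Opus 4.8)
The plan is to deduce the theorem directly from Lemmas~\ref{lemma:delay_lemma} and~\ref{lemma:fa_lemma} by matching $\tilde{L}^{\tS}(f_\vecW, D, c)$ against $\mathscr{\tilde{L}}(\tau)$ term by term. First I would rewrite the Lagrangian $\mathscr{\tilde{L}}(\tau)$ from~\eqref{eq:criteria_lagrange} as $\mathscr{\tilde{L}}(\tau) = \mathscr{L}_{delay}(f_\vecW, D) + c\,\mathscr{L}_{FA}(f_\vecW, D)$, identifying $\mathbb{E}_\theta(\tau-\theta)^+$ with the series for $\mathscr{L}_{delay}$ used at the start of the proof of Lemma~\ref{lemma:delay_lemma} and $-\mathbb{E}_\theta(\tau\mid\tau<\theta)$ with $\mathscr{L}_{FA}$ from~\eqref{eq:loss_fp}. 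This identification uses two facts: in scenario (B) the randomized estimate $\tau$ is completely determined by the emitted sequences $\{p^i_t\}$, so both sides are functionals of the same object; and the outer expectation over the true change point is taken with respect to the empirical distribution over $D$, so $\mathbb{E}_\theta(\cdot) = \frac{1}{N}\sum_{i=1}^N(\cdot)$. With this in place, the theorem reduces to the single scalar inequality
\[
\tilde{L}^{\tS}_{delay}(f_\vecW, D) - c\,\tilde{L}_{FA}(f_\vecW, D) \le \mathscr{L}_{delay}(f_\vecW, D) + c\,\mathscr{L}_{FA}(f_\vecW, D).
\]

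Second, I would invoke Lemma~\ref{lemma:delay_lemma}, which gives $\tilde{L}^{\tS}_{delay}(f_\vecW, D) \le \mathscr{L}_{delay}(f_\vecW, D)$, and then Lemma~\ref{lemma:fa_lemma} to bound the false-alarm term, and add the two inequalities; the hypothesis $c \ge 0$ is used precisely once, to carry the false-alarm bound through multiplication by $-c$ without reversing its sense. The common algebraic device in both lemmas is the telescoping of the survival probabilities, $\sum_{t=m+1}^{M} p^i_t \prod_{k=0}^{t-1}(1-p^i_k) = \prod_{k=0}^{m}(1-p^i_k) - \prod_{k=0}^{M}(1-p^i_k)$, combined with the crude tail estimates $t-\theta_i \ge \tS + 1 - \theta_i$ for the delay part and $t \ge \tilde{T_i} + 1$ for the false-alarm part, both of which I would reuse verbatim from the proof of Lemma~\ref{lemma:delay_lemma}.

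Before combining, I would record the mild regularity condition that makes the infinite sums converge and the telescoping collapse cleanly — it is enough, for instance, that each $p^i_t$ is bounded below by a positive constant for $t$ large, so that the survival probabilities $\prod_{k=0}^{t}(1-p^i_k)$ decay geometrically and in particular $\prod_{k=0}^{\infty}(1-p^i_k) = 0$. Under such a condition every discarded tail is a genuinely non-negative quantity, so truncation at $\tS$ (resp. $T$) can only decrease the value, which is exactly why each lemma yields a one-sided bound rather than an equality, and why $\tS \to \infty$ recovers the exact detection-delay expectation as noted after~\eqref{eq:approx_loss_delay}.

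The step I expect to be the main obstacle is the sign bookkeeping in the comparison above, not the computation. The delay term enters $\mathscr{\tilde{L}}(\tau)$ with a positive coefficient and is \emph{under}-estimated by truncation, so Lemma~\ref{lemma:delay_lemma} pushes in the right direction automatically; the false-alarm term enters with coefficient $-c$, so the surrogate $\tilde{L}_{FA}$ must be placed on the correct side of $\mathbb{E}_\theta(\tau\mid\tau<\theta)$ for the two one-sided bounds to assemble into a single lower bound for $\mathscr{\tilde{L}}(\tau)$. Getting Lemma~\ref{lemma:fa_lemma} stated and applied in exactly that orientation — in particular tracking how the minus sign in~\eqref{eq:loss_fp} and the extra correction term $-(\tilde{T_i}+1)\prod_{k=0}^{\tilde{T_i}}(1-p^i_k)$ interact with the truncation — is the delicate point; everything after that is addition of inequalities.
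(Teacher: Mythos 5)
Your proposal is correct and takes essentially the same route as the paper: the theorem is proved there exactly as a corollary of Lemmas~\ref{lemma:delay_lemma} and~\ref{lemma:fa_lemma}, adding the two per-term lower bounds with the nonnegative weights $1$ and $c$. The orientation issue you flag is resolved in the paper simply by folding the minus sign into the appendix definitions of $\mathscr{L}_{FA}$ and $\tilde{L}_{FA}$, so both lemmas bound their respective terms from below and no inequality is ever multiplied by a negative factor, which is precisely the bookkeeping you describe.
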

\begin{proof}
The proof follows from lemmas \ref{lemma:delay_lemma} and \ref{lemma:fa_lemma} as we provide a lower bound for both terms in the loss function in them.
\end{proof}

\subsection{Additional experiments}
\label{sec:appendix_ablations}

\subsubsection{Choosing best classic methods}
\label{sec:classic} 
As "classic" CPD approaches, we consider offline methods BinSeg, PELT, and non-trainable Kernel CPD from the ruptures package~\cite{truong2020selective}. 
To provide an honest comparison, we choose hyperparameters on the training set that maximise the F1 metric and then apply them to the test set. For search, we varied the method's central models (or kernels for Kernel CPD), the penalty parameter and the number of change points to detect in every sequence. The results with the best hyperparameters are presented in Table~\ref{tab:classic_results}. We also calculate the metrics when the hyperparameters are set with respect to the maximisation of the F1 metric on a validation set to understand the maximum ability of classic approaches better. 

\begin{table*}[!ht]
    \centering
    \caption{Main quality metrics for considered classic CPD approaches. $\uparrow$ marks metrics we want to maximize, $\downarrow$ marks metrics we want to minimize.  We present the best options with hyperparameters selected using training or test data. Best values for parameters selected via using training set are highlighted with \textbf{bold} font, best values for parameters selected on test set \underline{underlined}.} 
    \label{tab:classic_results}
    \begin{tabular}{llllcccc}
    \hline
    Data for hyper- & Method & Core model & Hyperpa- & Mean Time & Mean DD $\downarrow$ & F1 $\uparrow$ & Covering $\uparrow$\\
    param. select. &  & or kernel & rameters & to FA $\uparrow$ & & & \\
    \hline
    \multicolumn{8}{c}{1D Synthetic data} \\
    \hline
    \multirow{6}{*}{ Train } 
    & BinSeg & L2 & pen = 7000 & \textbf{101.39} & 7.72 & 0.9651 & 0.9651 \\
    & BinSeg & L2 & n\_pred = 1 & 64.66 & 0.46 & 0.4772 & 0.8900 \\
    & PELT & L2 & pen = 7000 & \textbf{101.39} & 7.72 & 0.9651 & 0.9651 \\
    & PELT & L2 & n\_pred = 1 & 14.65 & \textbf{0.05} & 0.0392 & 0.7721 \\
    & Kernel CPD & Linear & pen = 21 & 94.81 & 0.64 & \textbf{0.9872} & \textbf{0.9954} \\
    & Kernel CPD & Linear & n\_pred = 1 & 62.77 & 0.12 & 0.6842 & 0.9151 \\
    \hline
    \multirow{3}{*}{ Test } 
    & BinSeg & L2 & pen = 5400 & \underline{99.48} & 5.85 & 0.6438 & 0.9679 \\
    & PELT & L2 & pen = 5600 & \underline{99.48} & 5.85 & 0.6438 & 0.9679 \\
    & Kernel CPD & Linear & pen = 16 & 94.49 & \underline{0.33} & \underline{0.9904} & \underline{0.9964} \\ 
    \hline
    \multicolumn{8}{c}{100D Synthetic data} \\
    \hline
    \multirow{6}{*}{ Train } 
    & BinSeg & L2 & pen = $2.11\cdot10^5$ & \textbf{96.94} & 3.39 & 0.6201 & 0.9788 \\
    & BinSeg & L2 & n\_pred = 1 & 62.69 & 0.34 & 0.4772 & 0.8891 \\
    & PELT & L2 & pen = $9.6\cdot10^4$ & 95.53 & 2.13 & 0.5571 & 0.9808 \\
    & PELT & L2 & n\_pred = 1 & 4.95 & 0.05 & 0.0328 & 0.8025 \\
    & Kernel CPD & Linear & pen = 0.9 & 94.20 & 0.03 & \textbf{0.9968} & \textbf{0.9996} \\
    & Kernel CPD & Linear & n\_pred = 1 & 62.59 & \textbf{0.01} & 0.6871 & 0.9053 \\
    \hline
    \multirow{3}{*}{ Test } 
    & BinSeg & L2 & pen = $10^6$ & \underline{100.19} & 6.52 & 0.6667 & 0.9683 \\
    & PELT & L2 & pen = $4.8\cdot10^4$ & 98.58 & 4.95 & 0.6778 & 0.9730 \\
    & Kernel CPD & Linear & pen = 167 & 94.20 & \underline{0.03} & \underline{0.9968} & \underline{0.9996} \\
    \hline
    \multicolumn{8}{c}{Human Activity Recognition} \\
    \hline
    \multirow{6}{*}{ Train } 
    & BinSeg & RBF & pen = 0.2 & 4.63 & 0.37 & 0.3097 & 0.6634 \\
    & BinSeg & RBF & n\_pred = 1 & \textbf{10.17} & 4.83 & 0.8563 & 0.6624 \\
    & PELT & RBF & pen = 0.5 & 4.63 & 0.37 & 0.3097 & 0.6634 \\
    & PELT & L2 & n\_pred = 1 & 9.86 & 0.92 & 0.5740 & 0.8414 \\
    & Kernel CPD & RBF & pen = 0.3 & 5.90 & \textbf{0.07} & 0.6703 & \textbf{0.8538} \\
    & Kernel CPD & Cosine & n\_pred = 1 & 9.35 & 1.26 & \textbf{0.9023} & 0.8510 \\
    \hline
    \multirow{3}{*}{ Test } 
    & BinSeg & RBF & pen = 0.5 & \underline{10.17} & 4.83 & 0.8563 & 0.6624 \\
    & PELT & RBF & pen = 0.6 & \underline{10.17} & 4.83 & 0.8563 & 0.6624 \\
    & Kernel CPD & Linear & pen = 0.9 & 9.57 & \underline{0.09} & \underline{0.8761} & \underline{0.9450} \\
    \hline
    \multicolumn{8}{c}{Sequences of MNIST images} \\
    \hline
    \multirow{6}{*}{ Train } 
    & BinSeg & L2 & pen = 402 & \textbf{45.26} & 5.08 & 0.4901 & \textbf{0.8430} \\
    & BinSeg & L2 & n\_pred = 1 & 29.16 & 1.62 & 0.4252 & 0.7029 \\
    & PELT & L2 & pen = 402 & 45.26 & 5.08 & 0.4901 & \textbf{0.8430} \\
    & PELT & RBF & n\_pred = 1 & 9.95 & \textbf{0.05} & 0.0583 & 0.7259 \\
    & Kernel CPD & Linear & pen = 375 & 43.02 & 4.08 & \textbf{0.5500} & 0.8364 \\
    & Kernel CPD & RBF & n\_pred = 1 & 28.88 & 3.81 & 0.4887 & 0.6420 \\
    \hline
    \multirow{3}{*}{ Test } 
    & BinSeg & L2 & pen = 439 & \underline{47.99} & 5.63 & 0.5172 & \underline{0.8697} \\
    & PELT & L2 & pen = 439 & \underline{47.99} & 5.63 & 0.5172 & \underline{0.8697} \\
    & Kernel CPD & Linear & pen = 357 & 40.94 & \underline{3.20} & \underline{0.5724} & 0.8270 \\
    \hline
    \end{tabular} 
\end{table*}

In our opinion, setting the number of change points to detect in every sequence (n\_pred) to 1 is an unsuccessful choice as it provides a change point in every sequence, and the number of false alarms rapidly increases, so we avoid using it. One of the apparent disadvantages of classic methods is the necessity to pick hyperparameters manually, while representation-based methods do it during the learning process. We can see from Table~\ref{tab:classic_results} that the most suitable choices for the training set do not provide the best metrics on the test set in most cases. 

\subsubsection{Experiment on imbalanced datasets}
\label{sec:balanced}

In our main experiments, we mostly use balanced training datasets.
They include an equal number of sequences with and without change points. 
However, we expect to face change points less often than normal sequences without them in real-world data. 
Thus, to investigate the performance of proposed approaches in more realistic scenarios, we conduct additional experiments with sequences of MNIST images. 
We try a different number of sequences with change points and calculate the main quality metrics.
We test our three main methods based on representation learning (InDiD, BCE, BCE+InDiD) with the same model design as for the main experiments (see ~\ref{sec:experiments}, ~\ref{sec:implementation}).
The results are given in Table~\ref{table:imbalance_metrics}. 

\begin{table}[!ht]
    \centering
    \caption{Quality metrics for models trained on the datasets with a smaller amount of sequences with change points ($\#$ of seq. with CP), while we use $350$ sequences without change points for all the experiments. $\uparrow$ marks metrics we want to maximize, $\downarrow$ marks metrics we want to minimize. The results are averaged by 5 runs and are given in the format $mean\pm std$. Best values are highlighted with \textbf{bold} font.}
    \begin{tabular}{llcc}
    \hline
        $\#$ of seq. & Method & AUC $\downarrow$ & Max Covering $\uparrow$ \\ 
         with change & ~ & ~ & ~ \\ 
         point & ~ & ~ & ~ \\ \hline
        350 & InDiD & $213.76 \pm 10.77$ & $0.9656 \pm 0.0012$ \\
        (original) & BCE  & $237.94 \pm 8.54$ & \bm{$0.9661 \pm 0.0006$} \\
        ~ & BCE+InDiD & \bm{$202.19 \pm 2.07$} & $0.9656 \pm 0.0012$ \\ \hline
        250  & InDiD & $224.73 \pm 4.80$ & \bm{$0.9671 \pm 0.0029$} \\
        ~ & BCE  & $241.31 \pm 2.12$ & $0.9654 \pm 0.0038$ \\
        ~ & BCE+InDiD & \bm{$224.52 \pm 17.75$} & $0.9665 \pm 0.0031$ \\ \hline
        150  & InDiD & \bm{$230.67 \pm 5.57$} & $0.9644 \pm 0.0034$ \\
        ~ & BCE  & $239.384 \pm 3.802$ & \bm{$0.9657 \pm 0.0039$} \\
        ~ & BCE+InDiD & $237.29 \pm 16.59$ & $0.9644 \pm 0.0038$  \\ \hline
        50  & InDiD & \bm{$237.78 \pm 7.96$} & $0.9618 \pm 0.0058$  \\
        ~ & BCE  & $245.96 \pm 7.79$ & $0.9616 \pm 0.0046$ \\
        ~ & BCE+InDiD & $247.35 \pm 7.78$ & \bm{$0.9621 \pm 0.0046$} \\ \hline
        25  & InDiD & \bm{$239.82 \pm 11.63$} & \bm{$0.9565 \pm 0.0024$} \\
        ~ & BCE  & $248.54 \pm 5.65$ & $0.9563 \pm 0.0039$  \\
        ~ & BCE+InDiD & $253.21 \pm 13.44$ & $0.9562 \pm 0.0023$ \\ \hline
    \end{tabular}
    \label{table:imbalance_metrics}
\end{table}

We see that the performances of all the considered models deteriorate as the amount of "abnormal" data available for training decreases: Area under the Detection Curve increases, Covering decreases.
However, the models can still detect change points quite precisely, even when the share of sequences with change points in the training dataset is less than $7\%$.
We also see that our InDiD and BCE+InDiD methods are better than baseline BCE for a different share of "abnormal" data in the train set in terms of the AUC metric.

\subsubsection{Computational time comparison}
\label{sec:inference_time}

Inference time is a crucial parameter for a model's industrial application. We share the running time of considered approaches for two different datasets in Table~\ref{tab:time}.
The presented time is the mean value over $10$ runs.
KL-CPD and TSCP suffer from the necessity to compare several subsequences to obtain the predictions for the whole data stream in an online regime. So, their inference time is higher than those of other representation-based methods. 

    \begin{table}[!ht]
        \centering
        \caption{CPU inference time for one sequence for considered methods.}
        \label{tab:time}
        \begin{tabular}{lcc}
        \hline
        & Synthetic 100D & MNIST sequence \\
        \hline
        Method & \multicolumn{2}{c}{Inference time, ms} \\
        \hline
        BinSeg & 6.550 $\pm$ 0.018 & 5.850 $\pm$ 0.004 \\
        PELT & 8.450 $\pm$ 0.009 & 4.000 $\pm$ 0.006 \\
        Kernel\_CPD & 4.340 $\pm$ 0.003 & 4.750 $\pm$ 0.002 \\
        KL-CPD & $(1.700 \pm 0.008)\cdot 10^{3}$ & $(6.520 \pm 0.031)\cdot 10^{3}$ \\
        TSCP &  $(2.940 \pm 0.039)\cdot 10^{3}$ & $(6.280 \pm 0.037)\cdot 10^{3}$ \\
        BCE & 5.130 $\pm$ 0.016 & 2.660 $\pm$ 0.013 \\
        InDiD & 5.070 $\pm$ 0.021 & 2.650 $\pm$ 0.018 \\
        BCE+InDiD & 5.130 $\pm$ 0.021 & 2.650 $\pm$ 0.016 \\
        \hline
        \end{tabular} 
    \end{table}

\begin{table*}[!ht]
\caption{Statistics for the datasets used in our experiments.} \label{tab:datasets}
\centering
\begin{tabular}{@{}llcccccc@{}}
\toprule
Dataset   & Single sample & Train & Test & $\%$ of sequences with & $\%$ of sequences with & Sequence \\ 
  & shape & size & size & changes in train set & changes in test set & length \\ 
\midrule
1D Synthetic data & $1 \times 1$ & $700$ & $300$ & $48.9$ & $52.7$ & $128$\\
100D Synthetic data & $100 \times 1$ & $700$ & $300$ & $48.9$ & $52.7$ & $128$ \\
Human Activity Recognition & $28 \times 1$ & $3580$ & $1337$ & $83.9$ & $87.1$ & $20$ \\
Sequences of MNIST images & $28 \times 28 \times1$     & $700$  & $300$ & $51.1$ & $47.3$      & $64$\\
Explosions & $ 256 \times 256 \times 3$   & $310$ & $315$ & $50.0$ & $4.8$        & $16$\\
Road Accidents & $ 256 \times 256 \times 3$   & $666$ & $349$ & $50.0$ & $14.0$        & $16$\\
\toprule
\end{tabular}
\end{table*}

\begin{figure*}[!ht]
\begin{subfigure}{.5\textwidth}
 \centering
 \includegraphics[width=0.75\linewidth]{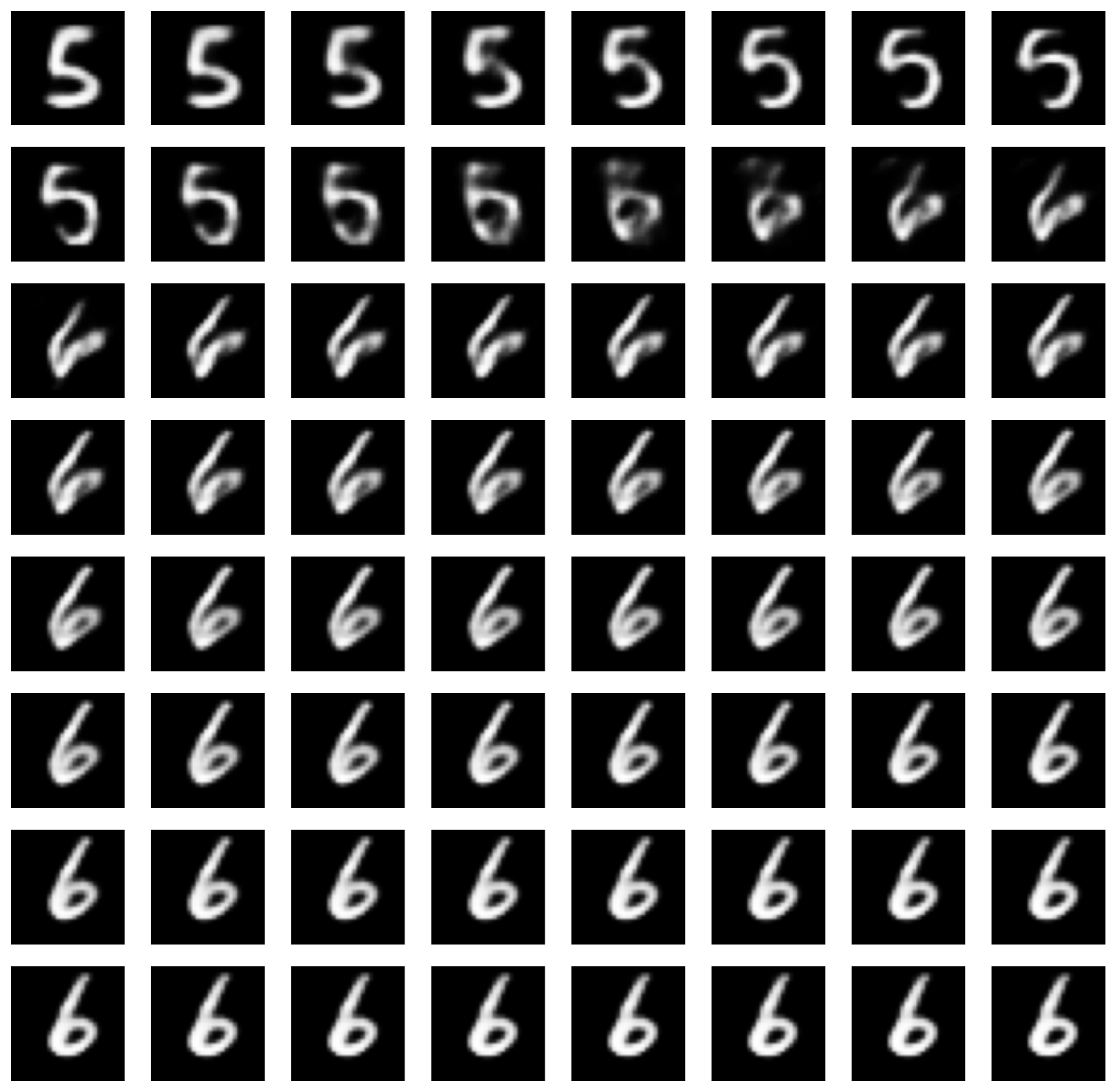}
\end{subfigure}%
\begin{subfigure}{.5\textwidth}
 \centering
 \includegraphics[width=0.75\linewidth]{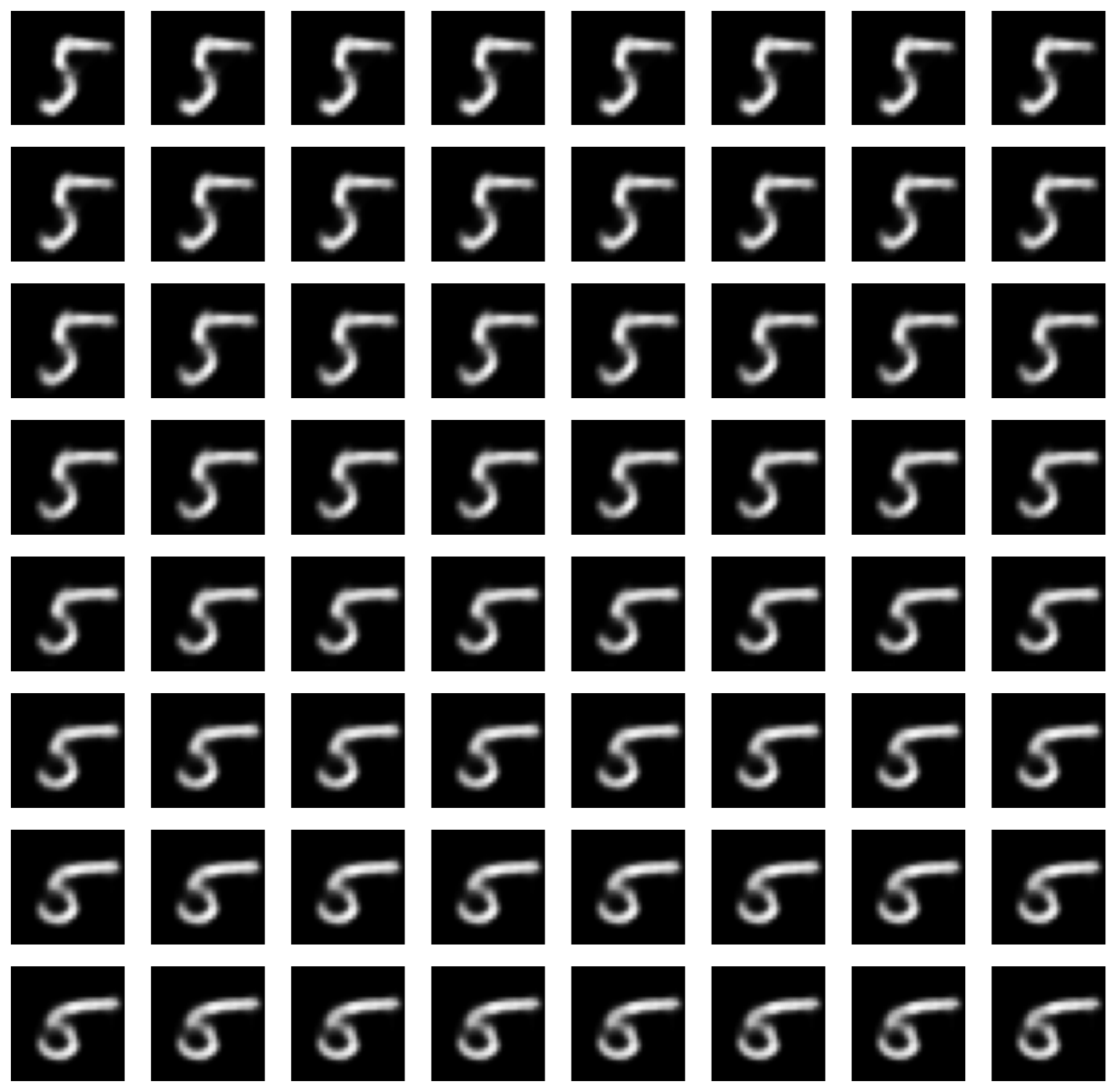}
\end{subfigure}
\caption{Examples of MNIST sequences with (left) and without (right) a change point. At each row, a sequence goes from left to right. Then we continue with the leftmost image in the next row.} \label{fig:mnist_examples}.
\end{figure*}

\begin{figure*}[!ht]
\begin{subfigure}{.5\textwidth}
 \centering
 \includegraphics[width=0.75\linewidth]{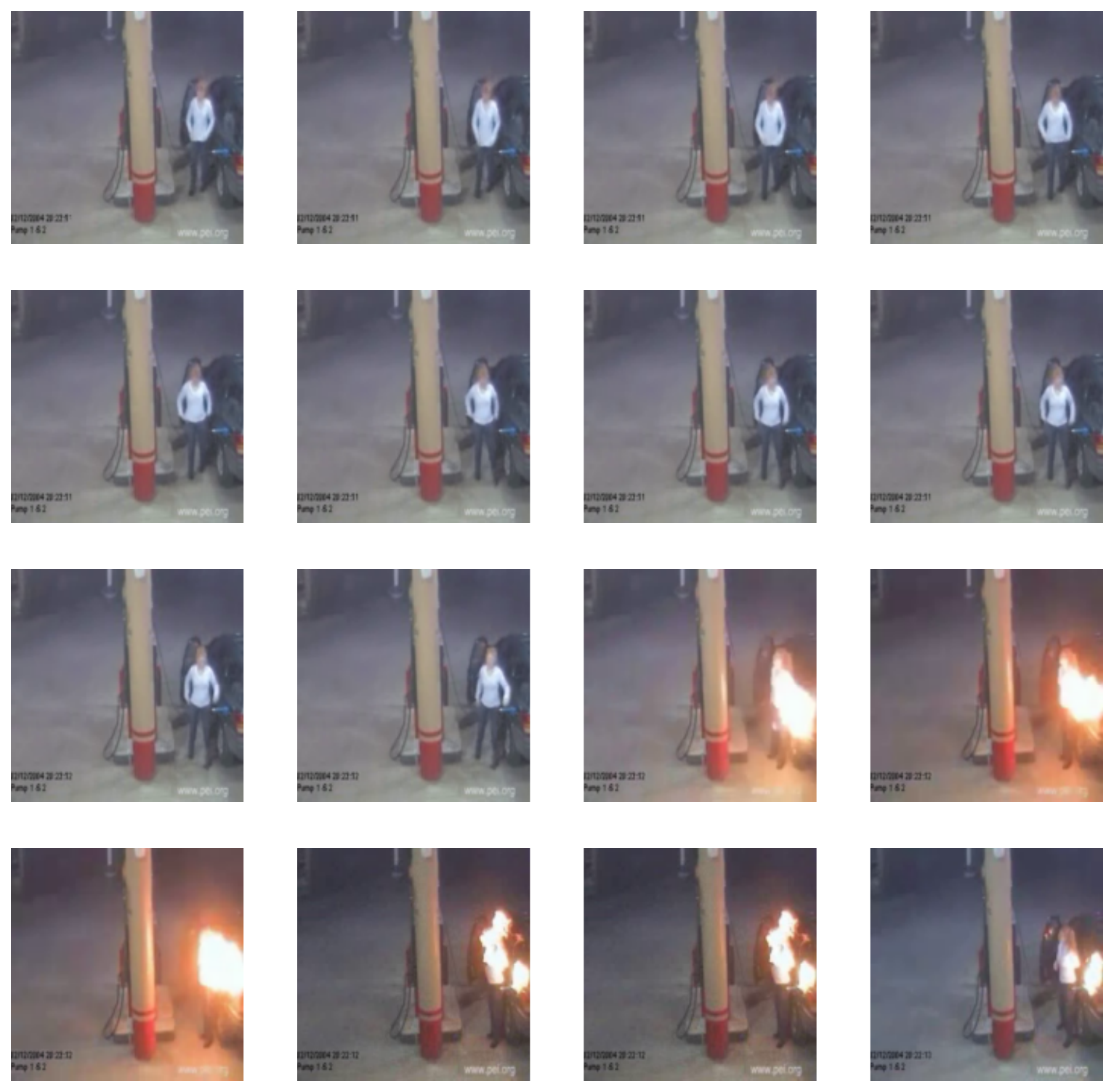}
\end{subfigure}%
\begin{subfigure}{.5\textwidth}
 \centering
 \includegraphics[width=0.75\linewidth]{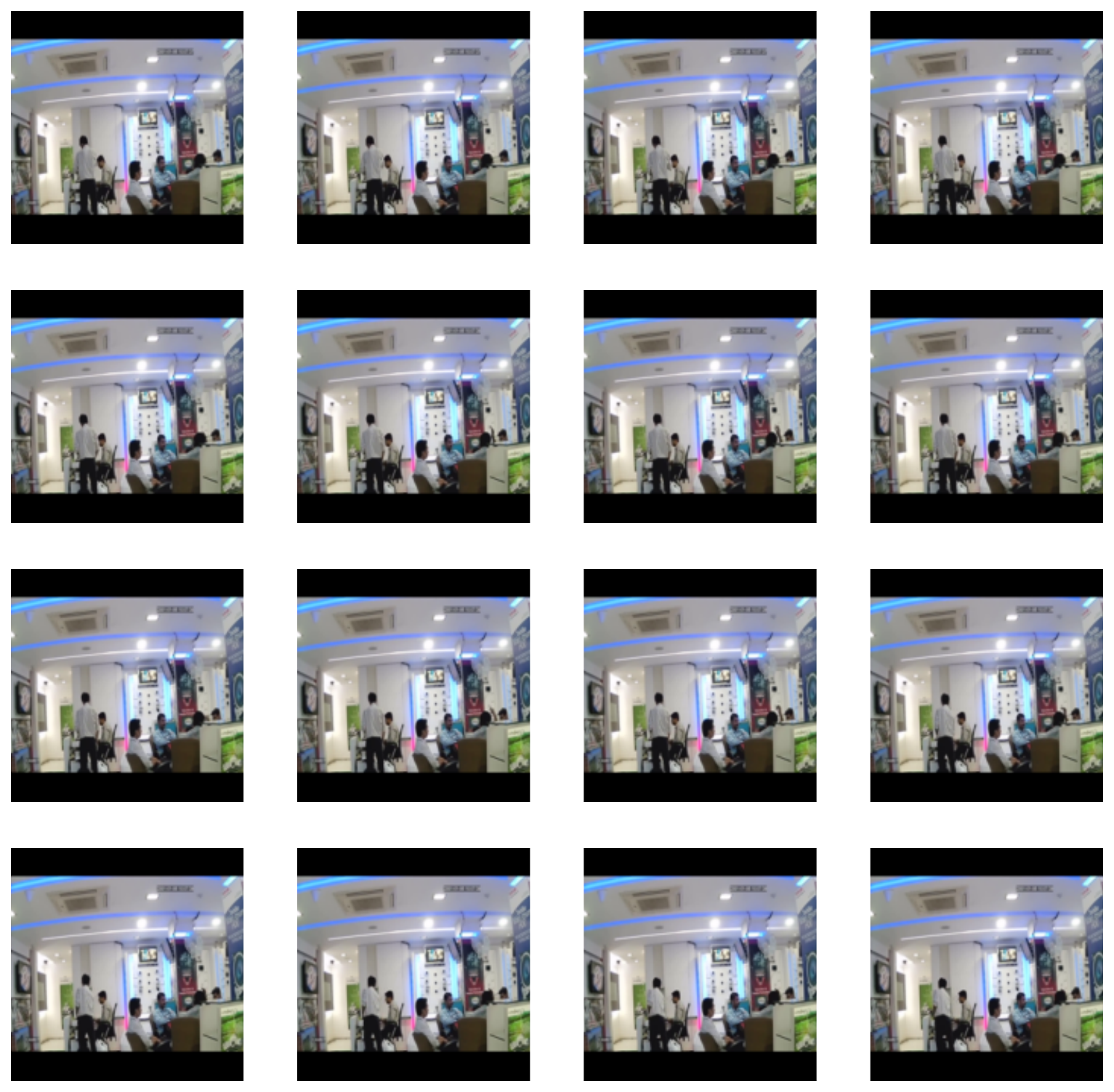}
\end{subfigure}
\caption{Examples from Explosion data with (left) and without (right) a change point. At each row, a sequence goes from left to right. Then we continue with the leftmost image in the next row.} \label{fig:explosion_examples}.
\end{figure*}

\subsection{Datasets details}
\label{sec:data_details}

In the paper, we consider a diverse set of datasets, with two of them introduced for the first time for the problem of supervised change point detection. Summary statistics on the used data are shown in Table \ref{tab:datasets}.
More information about data generation and preprocessing one can find below. For simplicity, we call sequences with change points "abnormal" and sequences without change points "normal". 

\textbf{Synthetic datasets.}  
To generate an abnormal sequence, we use two subsequences with randomly selected lengths for every dataset sample. The first subsequence corresponds to the normal state and is sampled from $\mathcal{N}(1, 1)$ in the 1D case or from $\mathcal{N}(\mathbf{1}, I)$ in the 100D case. The second one, abnormal, is from a Gaussian with a randomly selected mean between $2$ and $100$ (constant vector for the multivariate scenario) and the variance $1$ or $I$. The sequence without a change point is generated similarly to the first subsequence. We use balanced datasets with an equal number of normal and abnormal objects. 

\textbf{Human Activity dataset} contains sensors' records for $12$ types of human physical activity, such as walking, elevator, going upstairs, running, sleeping, etc., collected from the group of $30$ individuals. Each record has a length of $200$ and consists of $561$ features. The frequency of sensor measurements is $50$ Hz. 
We cut these records to get subsequences with changes in the type of human activity and without them. The length of every subsequence is $20$. We also select only $28$ features out of $561$ corresponding to the "mean" parameters. It is important to note that the activity type frequently changes in this dataset, so the number of normal examples is scarce, even for a small length of $20$.

\textbf{Sequences of MNIST images.}
To generate this dataset, we obtain latent representations for MNIST images using a pretty simple Conditional Variational Autoencoder \cite{sohn2015learning} (CVAE). The utilized architecture has a hidden dimension of $256$ and a latent dimension of $75$. 
Then we take two points in the latent space corresponding to a pair of digits and add $62$ points from the line connecting them. As a result, we have a sequence with a length of $64$ of latent representations of images. After reconstruction via CVAE's decoder, we get a sequence of images. Such an approach allows us to generate images with smooth transitions of digits, even if the first and the last digits differ. We expect these gradual changes to increase the difficulty of CPD. There are sequences with (e.g. from 4 to 7) and without (e.g. from 4 to 4) a change point in the result dataset. We generate 1000 sequences of length 64 distributed evenly between normal and abnormal samples. Two examples of obtained sequences are in Figure~\ref{fig:mnist_examples}.

\textbf{Explosions and Road Accidents.}
The whole dataset \cite{sultani2018real} consists of real-world $240\times320$ RGB videos with 13 realistic anomaly types such as explosion, road accident, burglary, etc., and normal examples. As we have already mentioned, we use only explosion and road accident videos for our research. The CPD specific requires a change in data distribution. We suppose that explosions and road accidents correspond to such a scenario, while most other types correspond to point anomalies. For example, data, obviously, comes from a normal regime before the explosion. After it, we can see fire and smoke, which last for some time. Thus, the first moment when an explosion appears is a change point.
Along with a volunteer, the authors carefully labelled chosen anomaly types. Their opinions were averaged. We provide the obtained markup and the code and release it for the first time, so other researchers can use it to validate their CPD algorithms\footnote{The data labelling are available at \url{ https://github.com/romanenkova95/InDiD}}.

To generate the dataset, we apply methods from torchvision \cite{paszke2019pytorch} that cut video on small clips (a subsequence of the initial frames' sequences) with a given step and frequency. We use $30$ frames per second, and the number of frames in the clip is $16$. For training data, we sample clips with step $5$, so there are overlapping sections. For test data, we cut the video on non-overlapping clips. We also transform videos into the shape $256\times 256$ and normalize them. The examples from the obtained data are shown in Figure \ref{fig:explosion_examples}.

\subsection{Implementation details}
\label{sec:implementation}

The main implementation details of all the considered CPD methods are summarized in Table~\ref{tab:implementation}. We denote window size for KL-CPD and TSCP methods (see ~\cite{chang2018kernel, deldari2021tscp2} for details) as "w.s.", the batch size used for training as "b.s.", learning rate as "l.r." and monitoring parameter for early stopping as "mon". 

For video preprocessing in our Explosions and Road Accidents experiments, we use a pre-trained feature extractor X3D \cite{feichtenhofer2019slowfast} that is denoted as "Extractor" in Table~\ref{tab:implementation}. We freeze its parameters while training all the CPD models. 

We mostly follow the original procedures for KL-CPD\footnote{\url{ https://github.com/OctoberChang/klcpd_code}} and TSCP\footnote{\url{ https://github.com/cruiseresearchgroup/TSCP2}} presented in papers and official repositories \cite{chang2018kernel, deldari2021tscp2}, varying only their hyperparameters. 
During the application of these methods, we faced several difficulties described below. To treat them, we had to implement some procedures out of the initial research scope. We expect deeper investigations to improve the performance, while they should be addressed in separate papers. In particular, for KL-CPD and TSCP, we highlight:
\begin{itemize} 
\item KL-CPD method processes a sequence of observations and outputs scores based on MMD statistic that corresponds to the similarity of the distributions of the two consecutive windows. To adapt this method to our setting, which works with change point probabilities, we propose transforming these non-negative MMD scores into predictions in the following way: $pred = \tanh(a\cdot scores)$. Thus, we scale the outputs to $[0, 1]$ and can interpret them as change probabilities similar to our InDiD method. Constant $a$ is a hyperparameter. We chose the following coefficients for our experiments: $a=100$ -- for Synthetic 1D, 100D and MNIST sequences; $a = 5$ -- for Human Activity Recognition; $a = 10^{7}$ -- for Explosions and Road Accidents. In a similar way, TSCP outputs the cosine similarity of consecutive windows. So, we use sigmoid function to scale the output, i.e. $pred = \sigma(-scores)$.  
\item As the KL-CPD approach is based on several recurrent networks that encode and decode the original sequence, it requires many parameters in multidimensional scenarios to decode the embeddings back into the original space. Direct applying it to video data turned out to be impossible due to the huge resource necessities. We overcome this issue by adding Linear layers to encoders and decoders to make the embedding size small enough to be stored and processed.
\item Another challenge was connected with training TSCP on synthetically generated data, such as 1D, 100D and MNIST. Because of our data generation procedure, there are many pretty similar sequences, as we always start from $f_\infty$ and end with $f_0$. TSCP tries by default to separate these close sequences in the embedding space. Thus, we suppose that such peculiarities negatively affect the method's performance and to make it work in such scenarios, we should change the sampling procedure and the training process in the method.
\end{itemize}

\begin{table*}[!ht]
    \centering
    \caption{Implementation details for the considered CPD methods.} \label{tab:implementation}
    \begin{tabular}{lllll}
    \hline
        Method & Core's & Loss' & \hspace{1.0mm}Learning & \hspace{1.0mm}Early Stopping \\
        ~ & architecture & parameters & \hspace{1.0mm}parameters & \hspace{1.0mm}parameters  \\ \hline
        \multicolumn{5}{c}{\textbf{1D Synthetic data}} \\ \hline
        Best classic & \hspace{1.5mm}Kernel CPD (kernel = linear) & pen = 21 & \hspace{1.0mm} na & \hspace{1.5mm}na  \\ \hline
        \hspace{-2.5mm} \begin{tabular}{l}
        KL-CPD \\
        (w.s. = 8) \\
        \end{tabular}
        & 
        \begin{tabular}{lll}
        Net D/G & Enc & GRU(1, 4) \\
        Net D & Dec & GRU(4, 1) \\
        Net G & Dec & GRU(1, 4) + Lin(4, 1) \\
        \end{tabular} &
        \hspace{-2.5mm}
        \begin{tabular}{l}
        lambda\_ae = 0.1 \\
        lambda\_real = 10 \\
        med\_sqdist = 1                
        \end{tabular} & 
        \begin{tabular}{l}
        b.s = 64 \\
        l.r. = $10^{-3}$ \\               
        \end{tabular} &
        \begin{tabular}{l}
        mon = val\_mmd2\_real \\
        delta = $10^{-5}$ \\
        patience = 5 \\               
        \end{tabular}
        \\ \hline
        \hspace{-2.5mm} \begin{tabular}{l}
        TSCP \\
        (w.s. = 16) \\
        \end{tabular}
        & 
        \begin{tabular}{ll}
        TSCP & (nb\_filters = 16, n\_steps = 4 \\
        &  code\_size = 4, kernel\_size = 5) \\
        \end{tabular}
        & temp\_NCE = 0.5 
        & \begin{tabular}{l}
        b.s = 4 \\
        l.r. = $10^{-4}$ \\               
        \end{tabular} & 
        \begin{tabular}{l}
        mon = train loss \\
        delta = $10^{-4}$ \\
        patience = 5 \\               
        \end{tabular}
        \\ \hline
        \hspace{-2.5mm} \begin{tabular}{l}
        BCE \\
        InDiD \\
        BCE + InDiD \\
        \end{tabular}
        & \begin{tabular}{l}
        LSTM(1, 4, dropout = 0.5) + \\
        + Linear(4, 1) + Sigmoid() \\
        \end{tabular}
        & $T=32$ &
        \begin{tabular}{l}
        b.s = 64 \\
        l.r. = $10^{-3}$ \\               
        \end{tabular} & 
        \begin{tabular}{l}
        mon = val loss \\
        delta = 0 \\
        patience = 10 \\
        \end{tabular} \\\hline
        \multicolumn{5}{c}{\textbf{100D Synthetic data}} \\ \hline
        Best classic & \hspace{1.5mm}Kernel CPD (kernel = rbf) & pen = 0.9 & \hspace{1.0mm} na & \hspace{1.5mm}na \\ \hline
        \hspace{-2.5mm} \begin{tabular}{l}
        KL-CPD \\
        (w.s. = 8) \\
        \end{tabular}
        & 
        \begin{tabular}{lll}
        Net D/G & Enc & GRU(100, 16) \\
        Net D & Dec & GRU(16, 100) \\
        Net G & Dec & GRU(100, 16) + Lin(16, 100) \\
        \end{tabular} &
        \hspace{-2.5mm}        
        \begin{tabular}{l}
        lambda\_ae = 0.1 \\
        lambda\_real = 10 \\
        med\_sqdist = 3                
        \end{tabular}     
        & 
        \begin{tabular}{l}
        b.s = 64 \\
        l.r. = $10^{-3}$ \\               
        \end{tabular} & 
        \begin{tabular}{l}
        mon = val\_mmd2\_real \\
        delta = $10^{-5}$ \\
        patience = 1 \\
        \end{tabular} \\\hline
        \hspace{-2.5mm} \begin{tabular}{l}
        TSCP \\
        (w.s. = 16) \\
        \end{tabular}
        &
        \begin{tabular}{ll}
        TSCP & (nb\_filters = 4, n\_steps = 4 \\
        &  code\_size = 4, kernel\_size = 5) \\
        \end{tabular}
        & temp\_NCE = 0.5 & 
        \begin{tabular}{l}
        b.s = 4 \\
        l.r. = $10^{-4}$ \\               
        \end{tabular} &
        \begin{tabular}{l}
        mon = train loss \\
        delta = $10^{-4}$ \\
        patience = 5 \\
        \end{tabular}  
        \\ \hline
        \hspace{-2.5mm} \begin{tabular}{l}
        BCE \\
        InDiD \\
        BCE + InDiD \\
        \end{tabular}
        & 
        \begin{tabular}{l}
        LSTM(100, 8, dropout = 0.5) + \\
        + Linear(8, 1) + Sigmoid() \\
        \end{tabular}
        & T $= 32$ & 
        \begin{tabular}{l}
        b.s = 64 \\
        l.r. = $10^{-3}$ \\               
        \end{tabular} & 
        \begin{tabular}{l}
        mon = val loss \\
        delta = 0 \\
        patience = 10 \\
        \end{tabular} \\\hline
        \multicolumn{5}{c}{\textbf{Human Activity Recognition}} \\ \hline
        Best classic & \hspace{1.5mm}Kernel CPD (kernel = rbf) & pen = 0.3 & \hspace{1.0mm} na & \hspace{1.5mm}na \\ \hline
        \hspace{-2.5mm} \begin{tabular}{l}
        KL-CPD \\
        (w.s. = 3) \\
        \end{tabular}
        & 
        \begin{tabular}{lll}
        Net D/G & Enc & GRU(28, 8) \\
        Net D & Dec & GRU(8, 28) \\
        Net G & Dec & GRU(28, 8) + Lin(8, 28) \\
        \end{tabular} &
        \hspace{-2.5mm}        
        \begin{tabular}{l}
        lambda\_ae = 0.2 \\
        lambda\_real = 1 \\
        med\_sqdist = 2                
        \end{tabular}     
        & 
        \begin{tabular}{l}
        b.s = 64 \\
        l.r. = $10^{-4}$ \\               
        \end{tabular} & 
        \begin{tabular}{l}
        mon = val\_mmd2\_real \\
        delta = $10^{-5}$ \\
        patience = 5 \\
        \end{tabular} \\\hline
        \hspace{-2.5mm} \begin{tabular}{l}
        TSCP \\
        (w.s. = 4) \\
        \end{tabular}
        &
        \begin{tabular}{ll}
        TSCP & (nb\_filters = 64, n\_steps = 8 \\
        &  code\_size = 4, kernel\_size = 5) \\
        \end{tabular}
        & temp\_NCE = 0.1 & 
        \begin{tabular}{l}
        b.s = 8 \\
        l.r. = $10^{-3}$ \\               
        \end{tabular} &
        \begin{tabular}{l}
        mon = train loss \\
        delta = $10^{-4}$ \\
        patience = 5 \\
        \end{tabular}  
        \\ \hline
        \hspace{-2.5mm} \begin{tabular}{l}
        BCE \\
        InDiD \\
        BCE + InDiD \\
        \end{tabular}
        & 
        \begin{tabular}{l}
        LSTM(28, 8, dropout = 0.5) + \\
        + Linear(8, 1) + Sigmoid() \\
        \end{tabular}
        & T $= 5$ & 
        \begin{tabular}{l}
        b.s = 64 \\
        l.r. = $10^{-3}$ \\               
        \end{tabular} & 
        \begin{tabular}{l}
        mon = val loss \\
        delta = 0 \\
        patience = 10 \\
        \end{tabular} \\\hline
        \multicolumn{5}{c}{\textbf{Sequences of MNIST images}} \\ \hline
        Best classic & \hspace{1.5mm}Kernel CPD (kernel = linear) & pen = 375 & \hspace{1.0mm} na & \hspace{1.5mm}na \\ \hline
        \hspace{-2.5mm} \begin{tabular}{l}
        KL-CPD \\
        (w.s. = 8) \\
        \end{tabular}
        & 
        \begin{tabular}{lll}
        Net D/G & Enc & GRU(784, 32) \\
        Net D & Dec & GRU(32, 784) \\
        Net G & Dec & GRU(784, 32) + Lin(32, 784) \\
        \end{tabular} &
        \hspace{-2.5mm}        
        \begin{tabular}{l}
        lambda\_ae = 0.001 \\
        lambda\_real = 0.01 \\
        med\_sqdist = 10               
        \end{tabular}     
        & 
        \begin{tabular}{l}
        b.s = 64 \\
        l.r. = $3\cdot 10^{-4}$ \\               
        \end{tabular} & 
        \begin{tabular}{l}
        mon = val\_mmd2\_real \\
        delta = $10^{-5}$ \\
        patience = 5 \\
        \end{tabular} \\\hline
        \hspace{-2.5mm} \begin{tabular}{l}
        TSCP \\
        (w.s. = 16) \\
        \end{tabular}
        &
        \begin{tabular}{ll}
        TSCP & (nb\_filters = 256, n\_steps = 128 \\
        &  code\_size = 8, kernel\_size = 5) \\
        \end{tabular}
        & temp\_NCE = 0.1 & 
        \begin{tabular}{l}
        b.s = 64 \\
        l.r. = $10^{-3}$ \\               
        \end{tabular} &
        \begin{tabular}{l}
        mon = train loss \\
        delta = $10^{-4}$ \\
        patience = 5 \\
        \end{tabular}  
        \\ \hline
        \hspace{-2.5mm} \begin{tabular}{l}
        BCE \\
        InDiD \\
        BCE + InDiD \\
        \end{tabular}
        & 
        \begin{tabular}{l}
        LSTM(784, 32, dropout = 0.25) + \\
        + Linear(32, 1) + Sigmoid() \\
        \end{tabular}
        & T $= 32$ & 
        \begin{tabular}{l}
        b.s = 64 \\
        l.r. = $10^{-3}$ \\               
        \end{tabular} & 
        \begin{tabular}{l}
        mon = val loss \\
        delta = 0 \\
        patience = 10 \\
        \end{tabular} \\\hline
        \multicolumn{5}{c}{\textbf{Explosions / Road Accidents}} \\ \hline
        \hspace{-2.5mm} \begin{tabular}{l}
        KL-CPD \\
        (w.s. = 8) \\
        \end{tabular}
        & 
        \begin{tabular}{lll}
        Net D/G & Enc & Extractor + Lin(12288, 100) + \\
        & ~ & + ReLU() + GRU(100, 16) \\
        Net D & Dec & GRU(16, 100) + \\
        & ~ & + Lin(100, 12288) + ReLU() \\
        Net G & Dec & GRU(100, 16) + \\
        ~ & ~ & + Lin(16, 12288) \\
        \end{tabular} &
        \hspace{-2.5mm}        
        \begin{tabular}{l}
        lambda\_ae = 0.1 \\
        lambda\_real = 10 \\
        med\_sqdist = 50               
        \end{tabular}     
        & 
        \begin{tabular}{l}
        b.s = 8 \\
        l.r. = $10^{-4}$ \\               
        \end{tabular} & 
        \begin{tabular}{l}
        mon = val\_mmd2\_real \\
        delta = $10^{-5}$ \\
        patience = 5 \\
        \end{tabular} \\\hline
        \hspace{-2.5mm} \begin{tabular}{l}
        TSCP \\
        (w.s. = 4) \\
        \end{tabular}
        &
        \begin{tabular}{ll}
        \multicolumn{2}{l}{Extractor +} \\
        + TSCP & (nb\_filters = 256, n\_steps = 256 \\
        &  code\_size = 128, kernel\_size = 5) \\
        \end{tabular}
        & temp\_NCE = 0.1 & 
        \begin{tabular}{l}
        b.s = 16 \\
        l.r. = $10^{-3}$ \\               
        \end{tabular} &
        \begin{tabular}{l}
        mon = train loss \\
        delta = $10^{-4}$ \\
        patience = 5 \\
        \end{tabular}  
        \\ \hline
        \hspace{-2.5mm} \begin{tabular}{l}
        BCE \\
        InDiD \\
        BCE + InDiD \\
        \end{tabular}
        & 
        \begin{tabular}{l}
        Extractor + \\
        + LSTM(12288, 64, dropout = 0.5) + \\
        + Linear(64, 1) + Sigmoid() \\
        \end{tabular}
        & T $= 8$ & 
        \begin{tabular}{l}
        b.s = 16 \\
        l.r. = $10^{-3}$ \\               
        \end{tabular} & 
        \begin{tabular}{l}
        mon = val loss \\
        delta = 0 \\
        patience = 10 \\
        \end{tabular} \\\hline
    \end{tabular}
\end{table*}

\end{document}